\definecolor{skyblue}{RGB}{204,229,255}
\definecolor{darkblue}{rgb}{0, 0, 0.5}
\newcommand{\gemma}{Gemma-2-2B }
\newcommand{\qwen}{Qwen-2.5-0.5B }
\newcommand{\pass}[1]{$\mrm{Pass@#1}$}
\newcommand{\mpassk}{\mrm{Pass@K}} 
\newcommand{\mpass}[1]{\mrm{Pass@#1}}
\newcommand{\best}[1]{$\mrm{Best@#1}$}
\newcommand{\mbest}[1]{\mrm{Best@#1}}
\title{Weight Ensembling Improves Reasoning in Language Models}
\begin{document}

\maketitle

\begin{abstract}
We investigate a failure mode that arises during the training of reasoning models, where the diversity of generations begins to collapse, leading to suboptimal test-time scaling. Notably, the Pass@1 rate reliably improves during supervised finetuning (SFT), but Pass@k rapidly deteriorates. Surprisingly, a simple intervention of interpolating the weights of the latest SFT checkpoint with an early checkpoint, otherwise known as WiSE-FT, almost completely recovers Pass@k while also improving Pass@1. The WiSE-FT variant achieves better test-time scaling (Best@k, majority vote) and achieves superior results with less data when tuned further by reinforcement learning. Finally, we find that WiSE-FT provides complementary performance gains that cannot be achieved only through diversity-inducing decoding strategies, like temperature scaling. We formalize a \emph{bias-variance tradeoff} of Pass@k with respect to the expectation and variance of Pass@1 over the test distribution. We find that WiSE-FT can reduce bias and variance simultaneously, while temperature scaling inherently trades off between bias and variance. 
\end{abstract}

\section{Introduction}  
 Recent advances in large language models (LLMs) have showcased their remarkable ability to perform complex reasoning, yet these successes often hinge on test-time scaling strategies \citep{lightman2023let,snell2024scalingllmtesttimecompute,wu2024inference}. In many applications, such as math problems, puzzles, and logical reasoning, LLMs employ a verification framework where it is significantly easier for the model to verify a candidate solution than to generate one from scratch. This distinction has given rise to strategies that sample multiple ``reasoning traces'' or sequences of reasoning steps during inference, selecting the best final guess through an outcome reward model (ORM) or majority vote. In this setting, an upper bound on the performance a model could achieve is measured by  \pass{K}, or the probability that at least one out of $K$ independently sampled reasoning traces is correct.
 
Unfortunately, while the standard training pipeline of supervised finetuning (SFT) followed by reinforcement learning (RL) dependably improves \pass{1} for reasoning, \pass{K} tends to drop early into finetuning \citep{cobbe2021trainingverifierssolvemath, chow2024inference, chen2025rethinking}. This mismatch arises from a symptom of finetuning called diversity collapse, where overtuned models yield less diverse generations. This is detrimental to \pass{K} since the model wastes $K$ attempts on only a handful of guesses. In fact, by analyzing the model's error rate i.e., $1 - \mpass{1}$, across the test distribution, we derive a \textbf{$\mb{\mpass{K}}$ bias-variance trade-off}. To improve \emph{expected test} \pass{K}, one can either reduce the \emph{bias} which is the expected error rate or how much the model's error rate \emph{varies} across problems. The latter term is connected to diversity --- more diversity allows models to hedge and do uniformly well across all test questions. In particular, during SFT, \pass{1} improves (bias $\downarrow$) at the cost of diversity collapse (variance $\uparrow$). 

Surprisingly, common ways of alleviating diversity collapse, such as early stopping at peak \pass{K} or decoding with high temperature, \emph{suffer from the reverse trade-off}: diversity improves (variance $\downarrow$) at the cost of overall \pass{1} degrading (bias $\uparrow$).  Consequently, in this paper we are concerned with a central question:

\begin{center}
\textit{Is it possible to simultaneously improve both \pass{1} and \pass{K}, thereby overcoming the bias-variance tradeoff inherent in current approaches?}
\end{center}

In our work, we introduce a simple, scalable and effective intervention that allows models to achieve both high \pass{K} and \pass{1} across mathematical reasoning tasks GSM8k, MATH, and AIME. The specific technique we use is a variant of WiSE-FT \citep{wortsman2022wiseft} where we interpolate the weights of the latest SFT checkpoint $\mb w_t$ with an early checkpoint $w_0$ as $\mb w_{\msf{WiSE}(t)} = \nicefrac{1}{2} \cdot \mb w_0 + \nicefrac{1}{2} \cdot \mb w_t$. Our key finding is that WiSE-FT successfully merges the diverse sampling capabilities of earlier checkpoints while retaining or surpassing the \pass{1} of later checkpoints. In Figure \ref{fig:WiSE-FT_vs_SFT}, we observe that the WiSE-FT model achieves \emph{both} higher \pass{K} and \pass{1} with more SFT steps $t$, unlike naive SFT which suffers from an early decay in \pass{K}. Moreover, the gains with WiSE-FT is unachievable by early-stopping or diversity-aware decoding alone.

\begin{figure}[t!]
\centering
\includegraphics[width=0.97\textwidth]{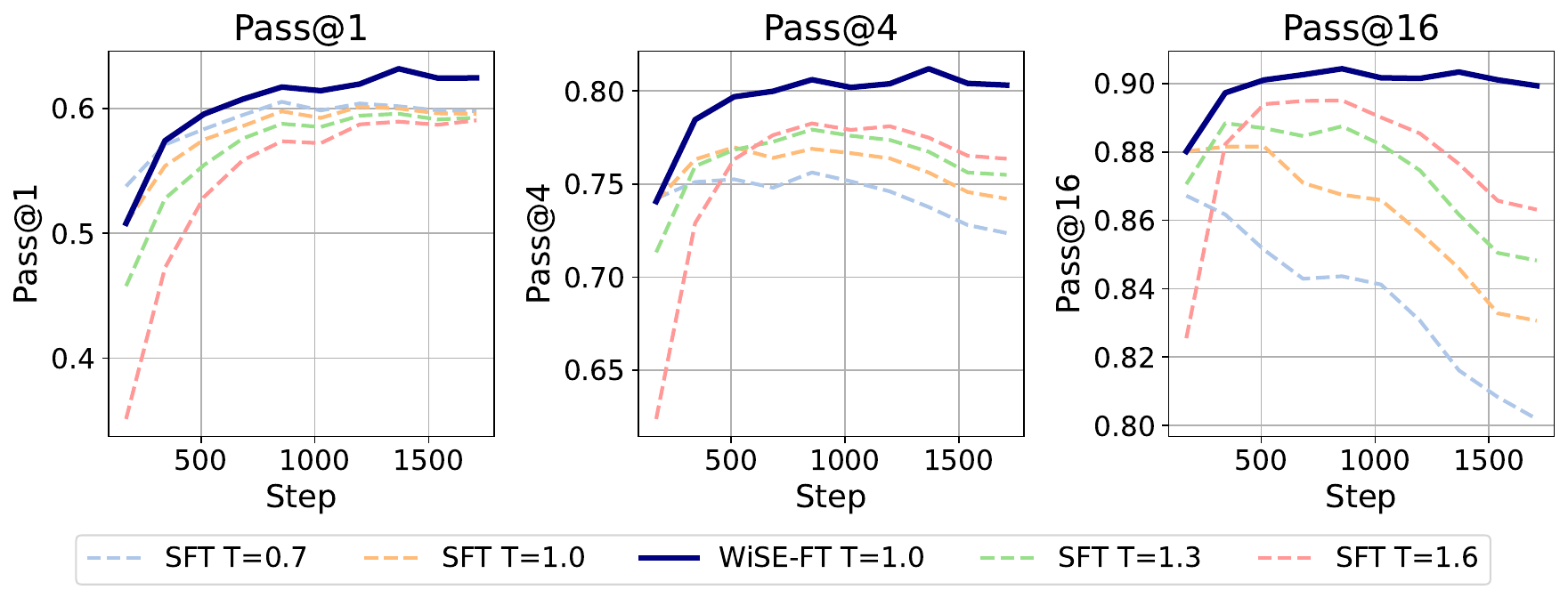}
\caption{\textbf{Pass@k of WiSE-FT versus SFT on GSM8k} \gemma supervised finetuned and evaluated on GSM8k. At each SFT timestep $t$, we evaluate Pass@k of checkpoint $\mb w_t$ (in dashed) with its WiSE-FT variant $\nicefrac{1}{2}\cdot\mb w_t + \nicefrac{1}{2}\cdot\mb w_0$ (in solid), where traces are independently sampled with temperature $T = [0.7, 1.0, 1.3, 1.6]$.
\label{fig:WiSE-FT_vs_SFT}}
\end{figure}

Thus, we propose a new paradigm of training reasoning models: 1.) Train extensively using SFT as long as \pass{1} improves, 2.) Perform WiSE-FT with an earlier SFT checkpoint, 3.) Continue tuning the WiSE-FT variant using RL.  Overall, the WiSE-FT model has the following immediate practical benefits: 
\begin{itemize}[leftmargin=15pt]
    \item \textbf{Better Test-Time Scaling} Across all datasets and base models, the WiSE-FT variant achieves the highest performance with test-time scaling (Majority Vote, ORM) compared to an overtrained SFT model paired with diversity-aware decoding.  
    \item \textbf{Better Reinforcement Learning} Since RL uses self-generated data to tune models, to generalize reliably, it is important for generations to provide sufficient learning signal while also having high coverage over the data space. We find that continued RL training starting from WiSE-FT weights achieves superior results with less synthetic data compared to initializing RL from the last SFT checkpoint and even early-stopped SFT.  
\end{itemize}

In summary, we provide a comprehensive analysis of how reasoning models suffer from diversity collapse during SFT and its negative downstream impact during RL and test-time scaling. We first discuss our WiSE-FT findings in \S\ref{sec:wiseft}. Motivated by this discovery, we investigate two fundamental questions. First, we investigate diversity collapse during SFT and RL of reasoning models in \S \ref{sec:diversity}. Diversity collapse not only impacts the model's ability to attempt different guesses. In fact, we make an even stronger observation --- the generations of reasoning models converge towards \emph{a single reasoning trace} for each test question. We theoretically prove that standard RL algorithms (i.e., REINFORCE and GRPO) fail to recover lost diversity in a simplified discrete bandit setting.

Second, we formalize the competing goals of \pass{1} and \pass{K} as a bias-variance trade-off in \S \ref{sec:tradeoff}. We empirically measure and compare the bias and variance of WiSE-FT versus early-stopping versus high temperature decoding. Notably, only WiSE-FT reduces both bias and variance. We conclude with a remark on the limitations of decoding strategies such as top-k \citep{shao2017generating}, nucleus \citep{holtzman2020curiouscaseneuraltext}, and min-p \citep{nguyen2024turningheatminpsampling}, at eliciting the maximum capabilities with test-time scaling from current reasoning models. 

\section{Related Works}
\paragraph{Diversity collapse with SFT:}
The standard pipeline for enhancing reasoning in LLMs involves an initial phase of supervised fine-tuning (SFT) followed by reinforcement learning (RL)~\citep{guo2025deepseek, setlur2024rewardingprogressscalingautomated}. SFT is critical for instilling interpretable and readable reasoning chains and ensuring that the model adheres to a consistent rollout templates ~\citep{guo2025deepseek}.
However, a number of recent works have identified critical pitfalls of SFT that hinders the model's ability to explore and ultimately it's overall problem solving ability. 
Notably, \citet{cobbe2021trainingverifierssolvemath} observe diversity collapse when finetuning on GSM8k training dataset, during which the Pass@1 continuously improves whereas Pass@k starts to fall shortly into the training. 
Similar diversity collapse phenomenon also exists in the self-improvement setting with SFT~\citep{song2024mind}, and is theoretically investigated as the sharpening effect \citep{huang2024self}.
This is not desirable as diverse sampling at inference is important for test-time scaling using majority voting~\citep{wang2023selfconsistencyimproveschainthought} or reward model guided search~\citep{setlur2024rewardingprogressscalingautomated, beeching2024scalingtesttimecompute}.
\citet{yeo2025demystifyinglongchainofthoughtreasoning, chu2025sftmemorizesrlgeneralizes} attribute this behavior to overfitting, memorization of samples and overfixation to a template style leading to reduced generalization.
In our work, we corroborate similar findings and propose ensembling over the course of SFT as a mitigation strategy.
\paragraph{Mitigating diversity collapse:}
Given the importance of diversity for effectively scaling inference-time compute, several recent works have proposed auxiliary finetuning objectives and decoding strategies to mitigate diversity collapse. \citet{li2025preserving} regularize the SFT process using a game-theoretic framework that encourages sparse updates, thereby preserving output diversity. ~\citet{zhang2024forcingdiffusedistributionslanguage} directly optimizes for diversity during finetuning.
Other approaches modify the finetuning procedure to directly optimize for Best-of-N sampling at inference time \citep{chow2024inferenceawarefinetuningbestofnsampling, sessa2024bondaligningllmsbestofn, chen2025rethinking}. Another line of work focuses on inference-time decoding, explicitly encouraging diverse solutions through modified beam search strategies \citep{vijayakumar2018diversebeamsearchdecoding, olausson2024selfrepairsilverbulletcode, chen2024alphamathzeroprocesssupervision, beeching2024scalingtesttimecompute}. \citet{li2023makinglargelanguagemodels} improve diversity during parallel decoding by appending curated prompts to the input. In formal reasoning settings e.g., Lean, methods such as Monte Carlo tree search have been used to diversify intermediate reasoning steps, as demonstrated in AlphaProof \citep{deepmind2024imo}. 
In this work, we identify a simple and complementary intervention during the finetuning process to maintain the diversity of generations. We especially care about enforcing diversity while preserving the overall accuracy of generations. 

\section{Preliminaries and Experimental Setup}
\subsection{Pass@k, Best@k, and Majority Vote} 
Given a reasoning model $f(\cdot)$, a decoding strategy $D$, and problem $x$, the model's solution is obtained by sampling a \emph{reasoning trace} $\mb r := [x, s^{(1)}, s^{(2)}, ..., s^{(n)}, \hat{y}]$ consisting of a sequence of intermediate steps $s^{(i)}$ and a final guess $\hat{y}$. Given $k$ independently sampled traces, \pass{K} measures the probability that at least one guess matches the true answer $y$:
\begin{align}
     \label{eq:passk} \mpass{K}(x) = \E_{[\mb r_i]_{i=1}^k \sim D(f(x))}\left[{\indicator{\exists \ i \in [k] \ \text{s.t.} \ \hat{y}_i = y}}\right] =  1 - (1 - \rho_x)^K
 \end{align}
where $\rho_x = P(\hat{y} = y \mid x, f, D)$ is the \emph{Pass@1} or marginal probability of sampling the ground truth answer. Then $(1 - \rho_x)^K$ is the probability that all $K$ guesses are incorrect. We will refer to \pass{1} as $\rho_x$ interchangeably in our paper.

In practice, test-time compute is scaled by selecting one of $K$ guesses either by a output reward model (ORM) or Majority Vote. Then we can measure \best{K} as
\begin{align*}
     \label{eq:bestk} \mbest{K}(x) = \E_{[\mb r_i]_{i=1}^k \sim D(f(x))}\left[\hat{y}_{i^*} = y\right] \text{ where } i^* = \arg \max_{i \in [K]} \sum_{j=1}^K \indicator{\hat{y}_i = \hat{y}_j} \text{ or } \mrm{ORM}(\mb r_i)
 \end{align*}
Notably, \pass{K} is equivalent to \best{K} using a perfect ORM verifier. As we will observe, WiSE-FT achieves both higher \pass{1} and \pass{K} and this directly translates to achieving  better \best{K} with an ORM verifier and by Majority Vote.
 
\subsection{Weight-Space Ensembling (WiSE-FT)}
WiSE-FT is a weight-space ensembling technique proposed by \citet{wortsman2022wiseft} to improve the out-of-distribution accuracy of finetuned models at no extra computational cost. In particular, while models tend to achieve better in-distribution performance after finetuning, they tend to be less robust to distribution shift. Surprisingly, by  simply interpolating the weights of the finetuned model $\mb w_t$ with the pretrained weights $\mb w_0$
\begin{align}
    \mb w_{\msf{WiSE}(t)} = \delta \cdot \mb w_0 + (1 - \delta) \cdot \mb w_t
\end{align}
WiSE-FT can achieve best of both words: the out-of-distribution accuracy of models improves without incurring a drop in in-distribution accuracy. Similar to this philosophy, we apply weight ensembling to achieve both the diverse generation ability of early SFT checkpoints while maintaining the high \pass{1} accuracy of later SFT checkpoints. 

\subsection{Training and Evaluation Pipeline}
The majority of our experiments are conducted on \gemma and \qwen. We perform SFT on a 30K subset of rephrased augmentations of GSM8k~\citep{cobbe2021trainingverifierssolvemath} and MATH~\citep{math} in MetaMath40k \citep{yu2023metamath} for 1710 steps or 10 epochs. We then continue finetuning on another 30K subset of rephrased training questions from MetaMath using Group Relative Policy Optimization (GRPO) with a binary reward of the correctness of the model's final answer. Finally, we evaluate models on GSM8K and MATH500, respectively. To estimate the true \pass{K} and \pass{1} marginalized over the distribution of sampled traces, we sample 100 reasoning traces per test example and average over them to estimate \pass{1}, i.e. $\rho_x$. Then to calculate \pass{K}, we use the theoretical formula $1 - (1 - \rho_x)^K$ in Equation \ref{eq:passk}. Unless noted otherwise, we employ a naive decoding strategy with top-p threshold $0.9$, temperature $T = 0.8$, and top-k with $K=50$.
\begin{figure}[t]
    \centering
    \includegraphics[width=0.32\textwidth]{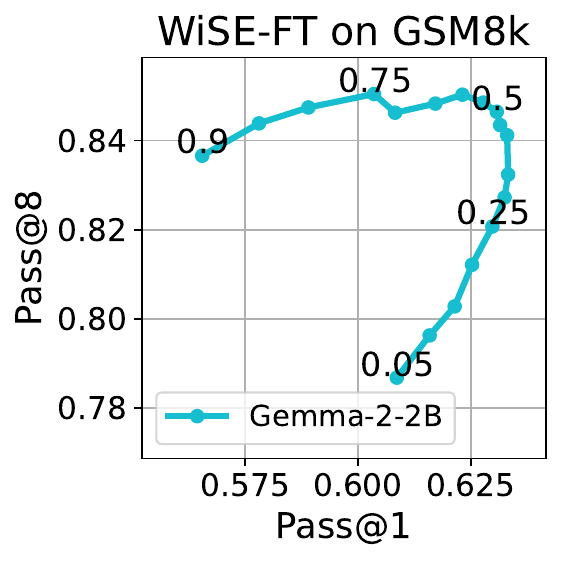}
    \hfill
    \includegraphics[width=0.32\textwidth]{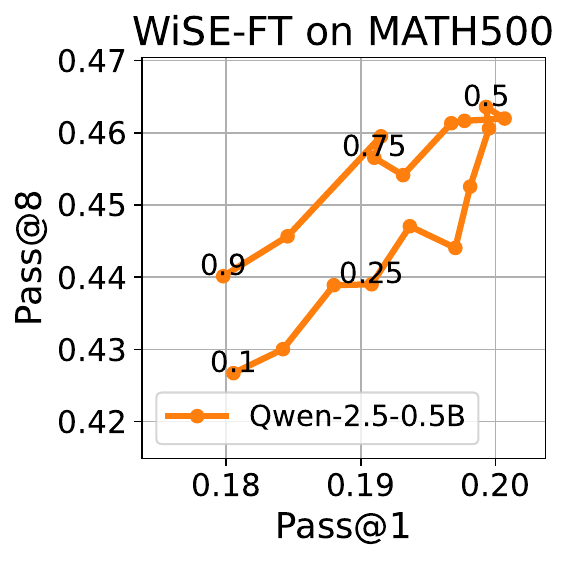}
    \hfill
    \includegraphics[width=0.32\textwidth]{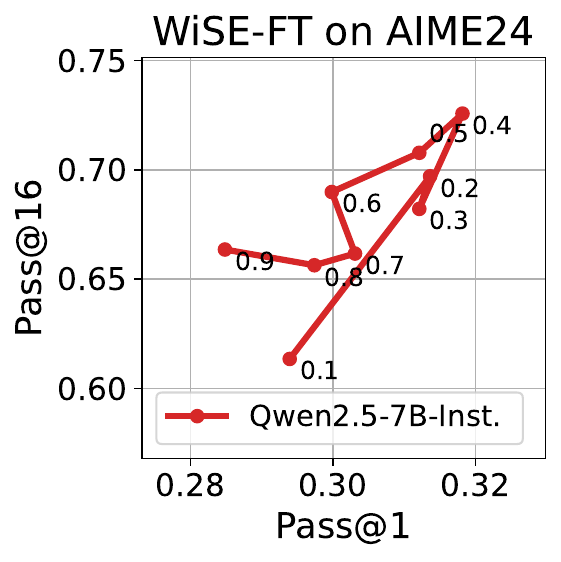}
    \caption{\textbf{Pass@1 vs. Pass@K across Interpolation Coefficients} We perform WiSEFT with $\delta \in [0.1, 0.9]$ between the first and last checkpoints of model (in legend) finetuned on GSM8K, MATH, and OpenThoughts-114K, then evaluate on GSM8K, MATH500, and AIME24, respectively. Early SFT model observe higher \pass{K} (y-axis) while later SFT model observes higher \pass{1} (x-axis). The interpolated model observe best of both metrics.}
\label{fig:main_interp_coefs}
\end{figure}

\begin{figure}[t!]
\centering
\begin{minipage}{0.57\textwidth}
\begin{subfigure}[h]{\textwidth}
\includegraphics[width=\textwidth]{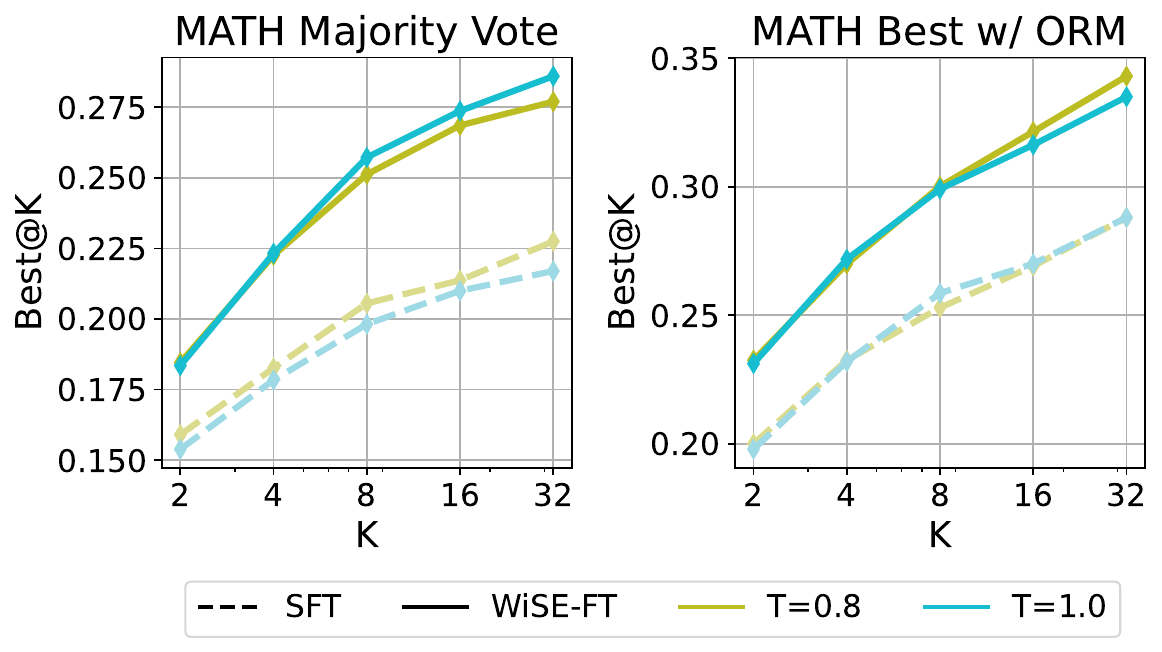}
\subcaption{}
\end{subfigure}
\begin{subfigure}[h]{\textwidth}
\centering
\includegraphics[width=\textwidth]{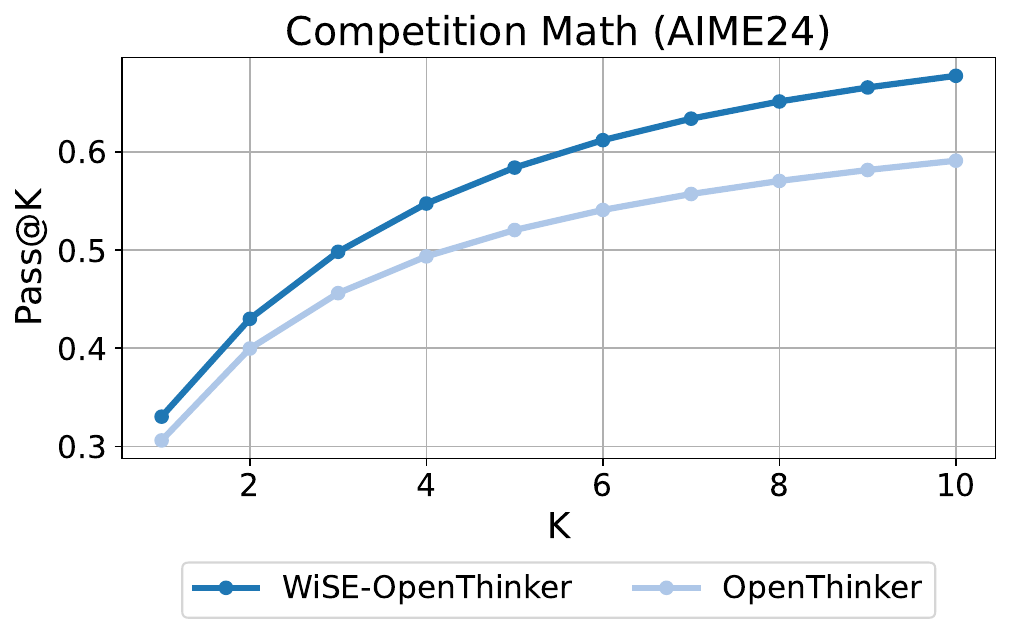}
\subcaption{}
\end{subfigure}
\end{minipage}
\hfill
\begin{minipage}{0.42\textwidth}
\begin{subfigure}[h]{\textwidth}
\includegraphics[width=\textwidth]{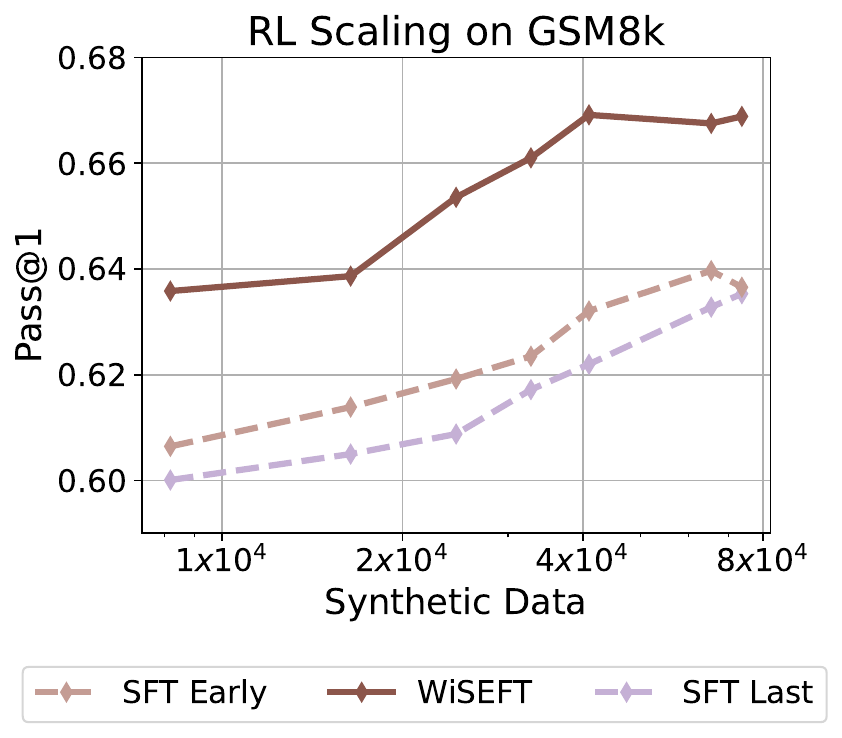}
\includegraphics[width=\textwidth]{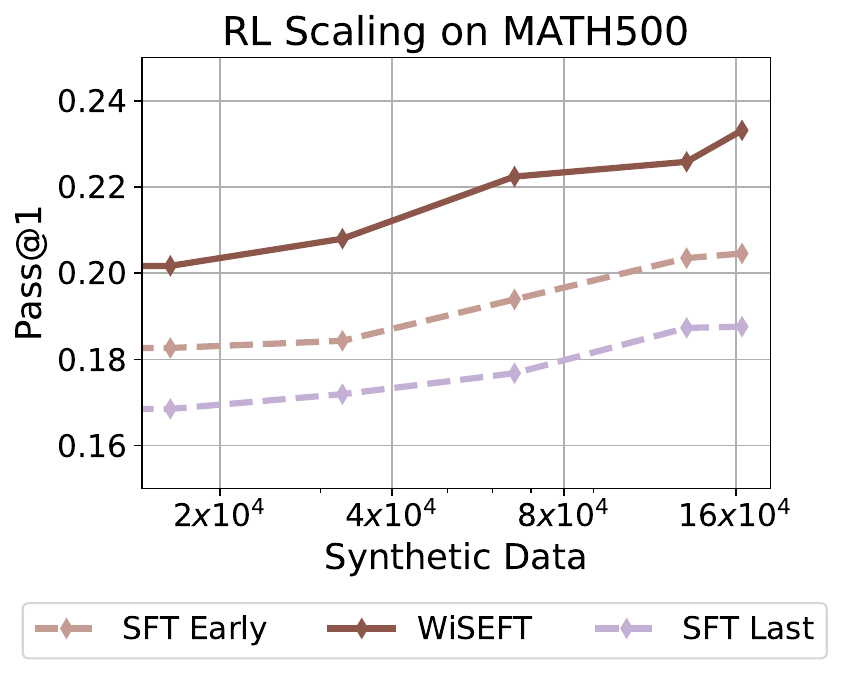}
\subcaption{}
\end{subfigure} 
\end{minipage}
\caption{\textbf{Downstream Advantages of WiSE-FT:} \textbf{(a) Best@K on MATH500} of the final SFT \gemma checkpoint and its WiSE-FT counterpart. 
\textbf{(b) Pass@K on AIME24} WiSE-FT after SFT on general purpose reasoning dataset OpenThoughts-114k achieves higher \pass{K} on AIME24.
\textbf{(c) RL Scaling} Gemma and Qwen SFT checkpoints further tuned by GRPO on GSM8K and MATH, respectively. RL from the final WiSE-FT model achieves higher \pass{1} with less data compared to GRPO starting from both early and late SFT checkpoints.}
\label{fig:practical_metrics}
\end{figure}

\section{Improving Diverse Reasoning Capabilities by WiSE-FT}
\label{sec:wiseft}
We first carefully track \pass{K} for $K \in \{1, 4, 32\}$ across the SFT trajectory of \qwen and \gemma. Similar to findings from \citet{cobbe2021trainingverifierssolvemath, chen2025rethinking}, we observe that \pass{1} continues to improve with longer SFT, whereas for larger $K = 4, 32$, \pass{K} tends to peak much earlier on in training (in Figure \ref{fig:WiSE-FT_vs_SFT}, \ref{fig:naive_gemma}, and \ref{fig:naive_qwen}). In other words, while \textbf{later} SFT checkpoints achieve higher \pass{1}, \textbf{earlier} SFT checkpoint achieve higher \pass{K}.  This tradeoff in model selection is not ideal downstream for test-time scaling.

Building upon this intuition, we propose weight ensembling between earlier and later SFT checkpoints. We apply a variant of WiSE-FT where instead of the pretrained model, we interpolate between the \textbf{earliest SFT checkpoint} (in our case, after 1 epoch of training) and the weights of later checkpoint. As shown in Figure \ref{fig:main_interp_coefs}, we observe a ``sweet spot'' of interpolation coefficients $\delta \in (0, 1)$ where the WiSE-FT model achieves both higher \pass{1} than the last SFT model and higher \pass{K} than the early SFT model. We will fix $\delta=\nicefrac{1}{2}$, which generally performs decently for all of the datasets we've tested. In fact, after WiSE-FT $\mb w_{\msf{WiSE}(t)}$, \emph{both} \pass{1} and \pass{k} \emph{grow monotonically with SFT steps $t$} (see Figure \ref{fig:WiSE-FT_vs_SFT}).

\paragraph{Better Test-Time Scaling} 
This boost in both \pass{1} and \pass{K} directly translates to better performance with test-time scaling. We measure \best{K} by Majority Vote and by selecting the reasoning trace with highest reward using an off-the-shelf ORM \texttt{RLHFlow/Llama3.1-8B-PRM-Deepseek-Data} \citep{xiong2024rlhflowmath}. We evaluate the performance of the last SFT checkpoint with highest \pass{1} versus the corresponding WiSE-FT variant with $\delta=\nicefrac{1}{2}$. In Figure \ref{fig:practical_metrics}, we see that the performance gap on MATH500 between the final \gemma SFT checkpoint and Wise-FT model widens with larger $K$. The WiSE-FT model achieves $5-7\%$ better performance with test-time scaling.

\paragraph{Better RL Scaling} 
WiSE-FT's ability to achieve both high \pass{1} and \pass{K} is particularly advantageous for continued RL training where models are further trained by policy gradient methods using self-generated data. In particular, WiSE-FT is able to generate data rich in learning signal (high \pass{1}) while still having high coverage over the data space (high \pass{K}). We continue training on rephrased training questions of GSM8K and MATH using GRPO paired with a binary reward of the correctness of the final guess. Across runs, we observe that continued RL training starting from the final WiSE-FT model improves performance more stably than finetuning starting from the final SFT checkpoint. Notably the final SFT checkpoint suffers low coverage over the data space, causing \pass{1} to improve slowly. We also try continued RL training from an earlier SFT checkpoint with peak \pass{4} performance. While RL scales better over the early SFT checkpoint in comparison to the final checkpoint, the performance still remains subpar compared to WiSE-FT. 

\subsection{General Purpose Reasoning Models}
So far we have studied the effect of WiSE-FT on models tuned on reasoning data for the same specific reasoning task (e.g., train on GSM8k and evaluate on GSM8k). We've additionally tested how well our findings generalize to models trained on general purpose reasoning datasets and tested on a \emph{out-of-distribution} reasoning task. We take Qwen2.5-7B-Instruct and SFT for 5 epochs on OpenThoughts-114k, a high-quality synthetic dataset of math, science, and coding questions paired with DeepSeek-R1 completions, then evaluate its performance on AIME24 competition problems (with ASY code for figures from \cite{muennighoff2025s1simpletesttimescaling}). In this setting, the \pass{K} trends during SFT on is more subtle. We still observe diversity collapse in Figure \ref{fig:aime-diversity}, but the affect is not strong enough for \pass{K} to drop back down. However, we observe that the rate at which Pass@K improves for $K \in \{16, 32\}$ slows down early while Pass@1 grows at a constant rate (Figure \ref{fig:aime-over-k}). 
We then perform WiSE-FT between the final and earlier checkpoint with higher diversity. We choose early checkpoint at epoch 3 where improvements in \pass{K} begin to slow. Similarly, we observe that WiSE-FT improves both Pass@1 and Pass@K in Figure \ref{fig:main_interp_coefs}. 

\section{Diversity Collapse during Finetuning}
\label{sec:diversity}
In previous sections we alluded to the phenomenon where \pass{K} decreases because SFT and RL induces \emph{diversity collapse} in reasoning traces. To verify this hypothesis, we sample 100 traces per test GSM8k problem and measure diversity using three metrics: 
\begin{enumerate}
    \item ~\textbf{Answer Diversity:} The fraction of unique guesses $\hat{y}$ among reasoning traces.
    \item ~\textbf{Operation Diversity:} The fraction of unique sequence of arithmetic operations performed among reasoning traces (In GSM8k, each intermediate step consists of a basic arithmetic operation, e.g. 5 + 3 = 8.).
    \item ~\textbf{Semantic Diversity:} The average cosine similarity between the text embeddings of the reasoning traces, computed using Stella-400M-v5~\citep{stella}
\end{enumerate}

\begin{figure}[h!]
    \centering
\includegraphics[width=0.9\linewidth]{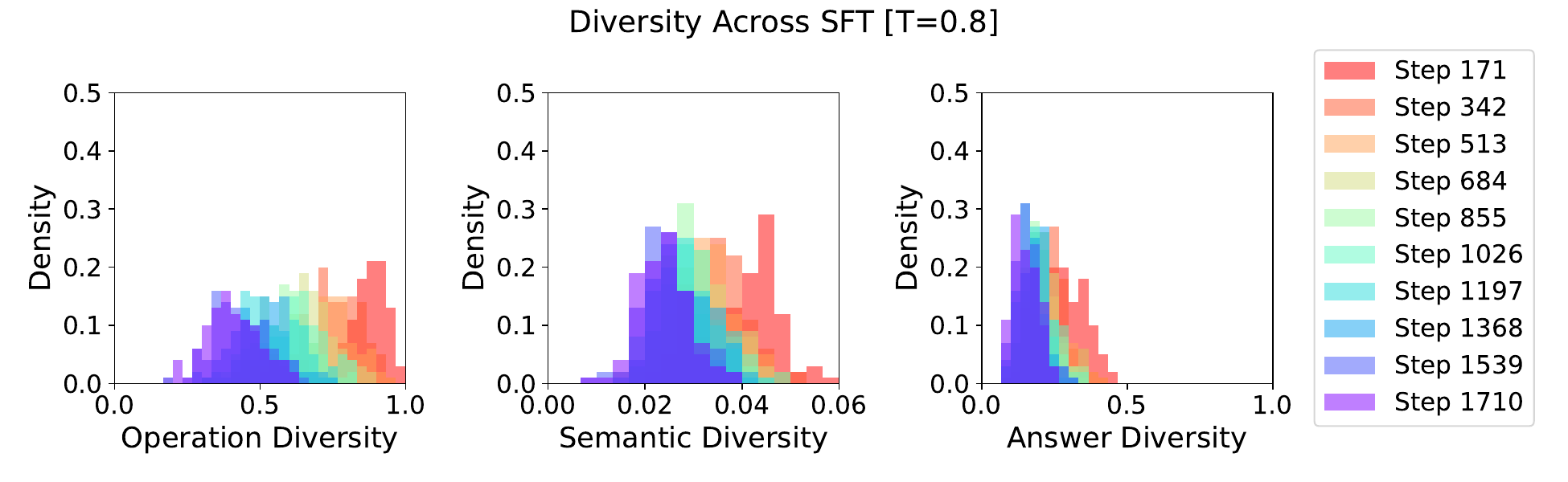}
    \caption{\textbf{Diversity Collapse} The answer, semantic, and operation diversity of \gemma reasoning traces across GSM8k test examples. Colors map to different SFT checkpoints.}
\label{fig:diversity}
\end{figure}

As shown in Figure~\ref{fig:diversity}, we observe a stark trend where longer SFT on \gemma incrementally suffers from clear diversity collapse across all diversity metrics. Specifically, the model places most of its probability mass not only on one particular guess, but on \emph{a single reasoning trace}, as evidenced by the reduced semantic and operation diversity. 

\subsection{Theoretical Discussion of Diversity Collapse During SFT and RL}
We assess theoretically why diversity collapse tends to arise during SFT and RL training. Our analysis reveals that while SFT and RL operate on different principles, they share common pathways that lead to reduced generation diversity when optimizing for accuracy.

\paragraph{Diversity Collapse during SFT} Overparameterized models are well-known to exhibit overconfidence in their predictions, an effect that has been studied extensively in classification \citep{guo2017calibration}. In particular, the model's confidence towards the most likely class \(P(\hat{y} = k_{\mathrm{max}} \mid x)\) is often much higher than the model’s accuracy. In binary classification with linear models \(f(x) = \sigma(\langle \mathbf{w}, \mathbf{x}\rangle)\) and linearly separable training data, gradient descent provably drives the norm of the weights to infinity, causing probabilities to collapse to 0 or 1 \citep{soudry2018implicit}. We demonstrate this in linear models in Appendix \ref{app:linear_theory}. A similar phenomenon likely arises in large reasoning models, which may also be prone to overfitting during SFT, ultimately leading to overly confident solutions in spite of limited coverage over the space of traces \citep{cobbe2021trainingverifierssolvemath}.

\begin{figure}[tb!]
    \centering
    \includegraphics[width=\linewidth]{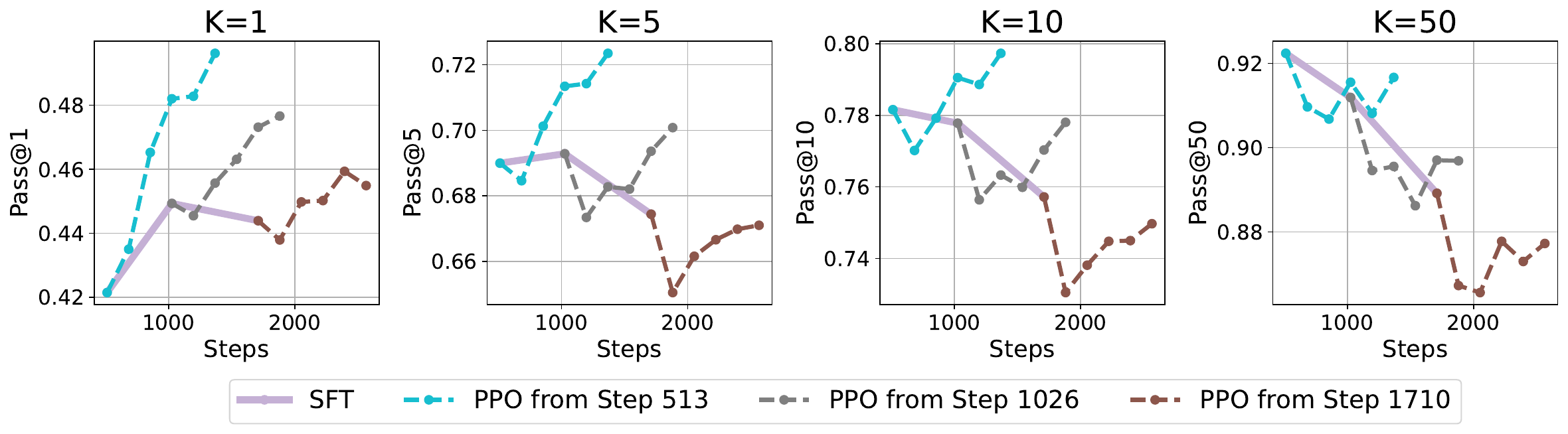}
    \caption{\textbf{Pass@k for SFT and RL of \qwen  on GSM8K.} The purple solid line measures \pass{K} across SFT steps, while the dashed lines correspond to further training different checkpoints by Proximal Policy Optimization (PPO). While Pass@1 continues to improve, Pass@k for larger K can decrease even with RL.} 
    \label{fig:ppo}
\end{figure}

\paragraph{Diversity Collapse during RL}  
We further prove why applying reinforcement learning to a low-diversity policy yields suboptimal results—and sometimes even exacerbates diversity collapse—in a discrete bandit setting (see Figure \ref{fig:ppo}). In this scenario, we assume there exist \(K\) equally good arms, corresponding to a set of successful strategies, and one bad arm that the policy should learn to avoid. We show two key results in this setting:

\begin{enumerate}[leftmargin=*]
\item \emph{Implicit Collapse of Policy Diversity without KL Regularization.} Our analysis demonstrates that when standard reinforcement learning algorithms—REINFORCE and GRPO—are applied without KL regularization, the training dynamics inevitably lead to a collapse in output diversity. Although multiple arms (actions) are equally optimal, the updates become self-enforcing as training progresses. Once one of the good arms is randomly reinforced, its probability increases at the expense of the others, ultimately driving the policy to converge on a single-arm strategy (Theorem~\ref{thm:collapse}). 

\item \emph{Diversity Does Not Increase with KL Regularization.} When KL regularization is incorporated to constrain the divergence from the initial policy in REINFORCE, the final policy no longer collapses into a single-arm strategy. However, the diversity of the converged policy \textbf{cannot} exceed the initial diversity. Concretely, we show that the probability distribution over the good arms remains proportional to the initial distribution when the RL algorithm converges (Theorem~\ref{thm:diversity}). This explains why initializing with a diverse policy is critical for the generalization of reinforcement learning.
\end{enumerate}

\section{Bias-Variance Tradeoff of Pass@K}
\label{sec:tradeoff}
So far, we saw a mismatch in growth of \pass{1} and \pass{K} during SFT and alluded to the impact of diversity collapse to \pass{K}. We now formalize the relationship between \pass{1}, \pass{K}, and diversity collapse. Notably, we show that the upper bound of \emph{expected} Pass@K over the test distribution can be decomposed into bias and variance quantities.

\subsection{Diversity Collapse leads to Bimodal Pass@1 Distribution}
Consider the expected \pass{K} over the entire test distribution $x,y \sim \mc{D}$. By Jensen's inequality, we can derive a straightforward upper bound of expected \pass{K} that decomposes into the \emph{bias and variance} of $1 - \rho_x$ (See proof in Appendix \ref{app:expected_pass@k_proof}). Note that the upper bound falls monotonically with larger bias and variance: 
\begin{tcolorbox}[left=7mm,right=0mm,colback=blue!5!white,colframe=white]
\begin{proposition} $\E_{x,y \sim \mc{D}}\left[\mpassk(x)\right] \leq 1 - ((\underbrace{\E_{x,y \sim \mc{D}}[1 - \rho_x]}_{\mrm{Bias}})^2 + \underbrace{\var(\rho_x)}_{\mrm{Variance}})^{k/2}$  
\label{prop:upper_bound}
\end{proposition}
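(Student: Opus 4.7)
The plan is to expand the definition of $\mpassk(x)$, rewrite the expectation in terms of the failure rate $1-\rho_x$, and then apply Jensen's inequality with a carefully chosen convex function. Concretely, using Equation~\ref{eq:passk}, I would start with
\begin{equation*}
\E_{x,y \sim \mc{D}}[\mpassk(x)] \;=\; 1 - \E_{x,y \sim \mc{D}}\bigl[(1-\rho_x)^K\bigr],
\end{equation*}
so bounding $\E[\mpassk(x)]$ from above is equivalent to bounding $\E[(1-\rho_x)^K]$ from below.

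The key step is to write $(1-\rho_x)^K = \bigl((1-\rho_x)^2\bigr)^{K/2}$ and apply Jensen's inequality to the function $t \mapsto t^{K/2}$, which is convex on $[0,\infty)$ whenever $K \geq 2$. Since $(1-\rho_x)^2 \geq 0$, this yields
\begin{equation*}
\E\bigl[(1-\rho_x)^K\bigr] \;=\; \E\Bigl[\bigl((1-\rho_x)^2\bigr)^{K/2}\Bigr] \;\geq\; \bigl(\E[(1-\rho_x)^2]\bigr)^{K/2}.
\end{equation*}
Next, I would expand the second moment via the standard identity $\E[Z^2] = (\E Z)^2 + \var(Z)$ applied to $Z = 1-\rho_x$, using translation invariance $\var(1-\rho_x) = \var(\rho_x)$ to get
\begin{equation*}
\E[(1-\rho_x)^2] \;=\; (\E[1-\rho_x])^2 + \var(\rho_x).
\end{equation*}
Substituting back gives exactly the claimed upper bound.

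The argument is essentially a one-line invocation of Jensen, so there is no real technical obstacle; the subtlety to flag is the regime of validity. For $K=1$, $t^{K/2}=\sqrt{t}$ is concave rather than convex, and indeed the stated inequality reduces to $\var(\rho_x) \leq 0$, which fails unless $\rho_x$ is deterministic. Thus I would state the proposition for $K \geq 2$ (the regime of interest for test-time scaling), and note that the bound is tight precisely when $1-\rho_x$ is almost-surely constant, i.e.\ when the error rate does not vary across problems. This framing motivates the bias/variance decomposition used in \S\ref{sec:tradeoff}: lowering either $\E[1-\rho_x]$ (bias) or $\var(\rho_x)$ (variance) tightens the bound, so diversity collapse hurts \pass{K} even when \pass{1} is held fixed.
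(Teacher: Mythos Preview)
Your proposal is correct and matches the paper's own proof essentially line for line: apply Jensen to $t\mapsto t^{K/2}$ on $(1-\rho_x)^2$, then expand the second moment as $(\E[1-\rho_x])^2+\var(\rho_x)$. Your additional remarks on the $K\geq 2$ convexity requirement and the tightness condition are not in the paper's proof but are accurate and helpful.
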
 
\end{tcolorbox}

\begin{figure}[t]
\centering
\begin{subfigure}[h]{\textwidth}
\includegraphics[width=\linewidth]{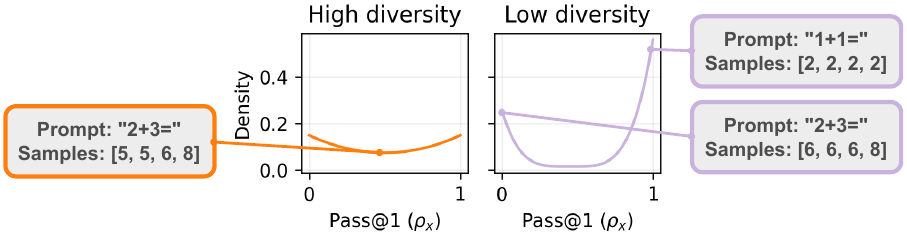}
\caption{}
\label{fig:diversity_schematic}
\end{subfigure}
\begin{subfigure}[h]{\textwidth}
\includegraphics[width=\linewidth]{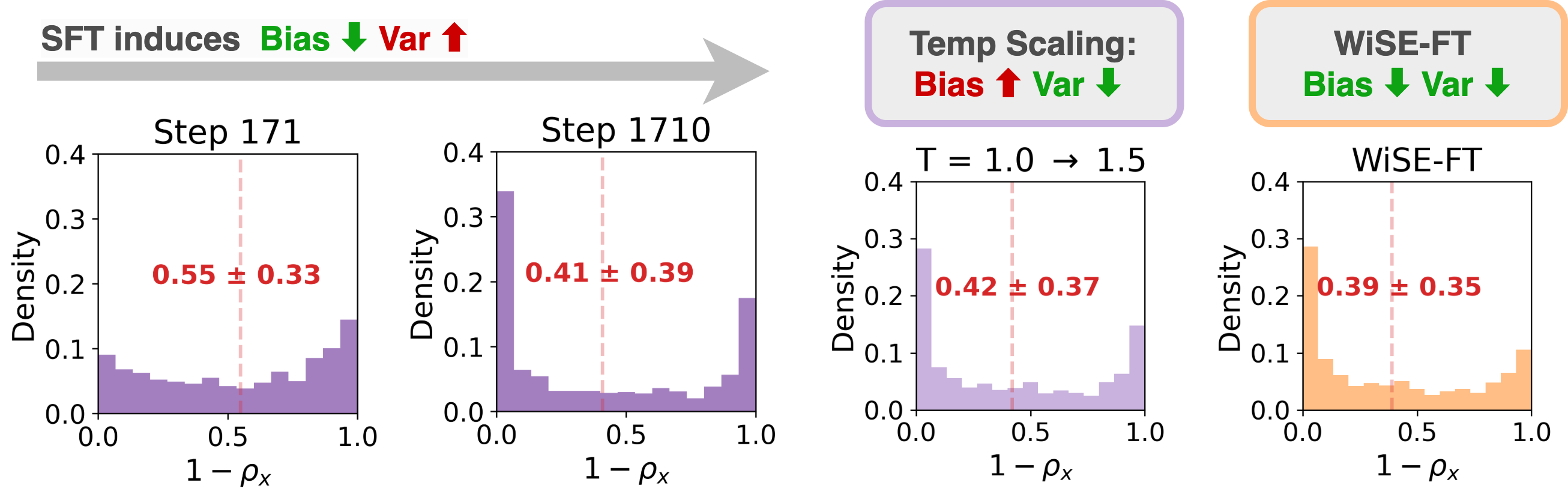}
\caption{}
\label{fig:bimodal_accs_qwen}
\end{subfigure}
\caption{\textbf{Histogram of error $\mb{1 - \rho_x}$} of \gemma SFT checkpoints across GSM8k test. SFT progressively decreases bias but increases variance of error i.e., $1 - \mpass{1}$, across the test distribution, causing \pass{K} to fall. Applying Wise-FT reduces both bias and variance, but temperature scaling trades off decreasing variance with increased bias. }
\end{figure}

In Figure \ref{fig:bimodal_accs_qwen}, we plot the distribution of error $1 - \rho_x$, estimated using 100 sampled traces, over GSM8K test examples. We notice two trends with longer SFT. First, \emph{bias decreases}, i.e., the expected error shifts towards 0. However, the distribution becomes increasingly bimodal with the densities converging towards the two extremes 0 and 1. As a result, the \emph{variance increases with longer SFT}. This increase in variance directly explains the drop in Pass@k.

The bimodality of the $1 - \rho_x$ distribution means that the \pass{1} of any test problem is either very high or very low. Interestingly, one explanation for the increased bimodality of the distribution of $1 - \rho_x$ is in fact when models suffer from diversity collapse. In other words, a particular guess to be oversampled for each test problem. If the model  places high probability on an incorrect guess, \pass{1} is very low.  On the other hand, if the model places high probability on the correct guess, \pass{1} is very high. We illustrate this relationship in Figure \ref{fig:diversity_schematic}. All in all, \pass{K} can be improved in two ways --- either reduce bias by improving \pass{1} \emph{or} reduce variance by increasing diversity.

\subsection{WiSE-FT vs. Diverse Decoding} 
While we've proposed inducing diversity by WiSE-FT, another common alternative for inducing diversity is temperature scaling the logits. High temperature smoothens the logits allowing the model to more likely sample low probability tokens. In Figure \ref{fig:WiSE-FT_vs_SFT}, we see that while high temperatures indeed improve \pass{K}, the \pass{K} at any SFT timestep notably \emph{never reaches} the \pass{K} of our final WiSE-FT model.
If temperature scaling also increases diversity, why does WiSE-FT strictly outperform sampling with high temperature?
In Figure \ref{fig:bimodal_accs_qwen}, we plot the distribution of $1 - \rho_x$ if we sample from the last SFT checkpoint with high temperature $T = 1.5$. As expected, we see that the model reasons more diversely. This smoothens the bimodal peaks and reduces the variance. However, the average accuracy of the model generations also degrades, causing the bias goes back up. We suspect bias-variance tradeoff is inherent in diversity-inducing decoding approaches. For example, min-p \citep{nguyen2024turningheatminpsampling} combines temperature scaling with adaptive thresholding to not sample outlier tokens. However, this additional control is unable to \emph{reduce} bias (Figure~\ref{fig:minp}). Surprisingly, WiSE-FT uniquely manages to reduce both bias and variance.

\section{Discussion}
In this work, we investigated the phenomenon of diversity collapse during the training of reasoning models. Our analysis reveals that standard SFT and RL pipelines can deteriorate in Pass@$K$ due to the convergence of model generations toward a single reasoning trace. We demonstrated that WiSE-FT, which interpolates between early and late SFT checkpoints, significantly improves both Pass@1 and Pass@$K$ across multiple math datasets and model scales. This is unlike alternative approaches such as temperature scaling or early stopping, which face an inherent tradeoff. Furthermore, improving on these metrics corresponded with better adaptation to test-time scaling and RL. But other limitations of WiSE-FT may exist at larger scale, which we leave for future work.

\begin{wraptable}{r}[1mm]{0.37\textwidth}
\centering
\begin{tabular}{ccc}
\hline
Method & Pass@2 & Pass@4 \\
\hline
Nucleus & 0.57 & 0.67 \\
Min-p & 0.57 & 0.67 \\
Top-k & 0.56 & 0.67 \\
Optimal & $\mb{0.76}$ & $\mb{0.83}$ \\
\hline
\end{tabular}
\caption{Best Pass@k of Gemma on GSM8k across SFT checkpoints}
\label{tab:sampling_strategies}
\end{wraptable} 
Overall, our work reveals the importance of maintaining diversity in reasoning models. Current decoding strategies (e.g., min-p, nucleus, and top-k) are still unable to fully extract a model's capabilities. We estimate that a significant gap, of tens of percent, remains compared to the optimal decoding strategy for \pass{K}, i.e., top-K sampling over the model's marginal answer distribution $P(\hat{y} \mid x)$ (see Table~\ref{tab:sampling_strategies} and Appendix \ref{app:sec:oracle}). We encourage future works to address downstream limitations more carefully in earlier stages of the training pipeline.

\section{Acknowledgements}
We'd like to thank Aviral Kumar, Sean Welleck, Amrith Setlur and Yiding Jiang for insightful discussions about test-time scaling and reinforcement learning. We'd also like to thank Alex Li, Sachin Goyal, and Jacob Springer for their meaningful contribution to our figures and literature review. We gratefully acknowledge support from Apple, Google, Cisco, OpenAI, NSF, Okawa foundation, the AI2050 program at Schmidt Sciences (Grant $\#$G2264481), and Bosch Center for AI.

\bibliography{colm2025_conference}
\bibliographystyle{colm2025_conference}

\appendix
\onecolumn
\newpage
\section{SFT in Binary Classification}
\label{app:linear_theory}

\paragraph{Data and Model Setup} We train a linear classifier $f(\mb x) = \langle \mb w, \mb x \rangle$ from random initialization over a binary Gaussian mixture distribution: 
\begin{align}
    x \mid y \sim \mc{N}(y \mb \mu, I^{d\times d}) \\ 
    y \in \{1, -1\} \text{ uniformly}
\end{align}
Given a model, we sample predictions, namely $\hat{y} = 1$ with probability $\sigma(\langle\mb w, \mb x \rangle) = ( 1 + \exp(-\langle \mb w, \mb x \rangle))^{-1}$, or $\hat{y} = 0$. Then, per-example Pass@1 is equal to $\rho_x = \sigma(y \cdot \langle\mb w, \mb x \rangle)$. Similarly, the expected Pass@k is equal to $1 - (1 - \rho_x)^k$.

\begin{wrapfigure}[6]{r}{0.25\textwidth}
\vspace{-8mm}
\centering
\includegraphics[width=\linewidth]{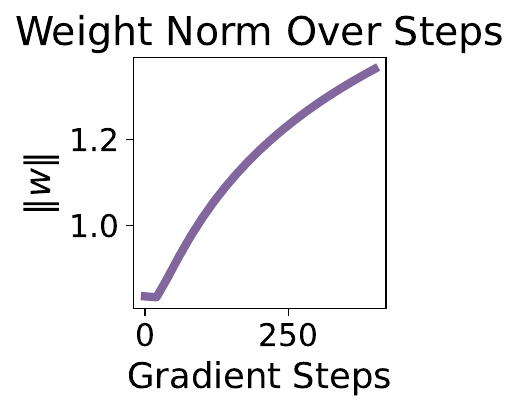}
\caption{Weight Norm}
\label{app:fig:weight_norm}
\end{wrapfigure}
In our experiment, we train an overparametrized linear classifier over binary Gaussian data mixture $x \mid y \sim \mc{N}(y \cdot \frac{1}{\sqrt{d}}\mb 1, \frac{1}{2} I)$ where $y=\{-1, 1\}$ and $d=1000$. We then evaluate $\rho_x$ of 400 test samples. As training progresses, the distribution of $\rho_x$ over the test data become bimodal due to the norm of $w$ monotonically increasing once it separates the training examples. Similarly, we observe that this leads to a drop in Pass@k while Pass@1 continues to improve. 

\vspace{5mm}
\begin{figure}[h!]
    \centering
    \includegraphics[width=\textwidth]{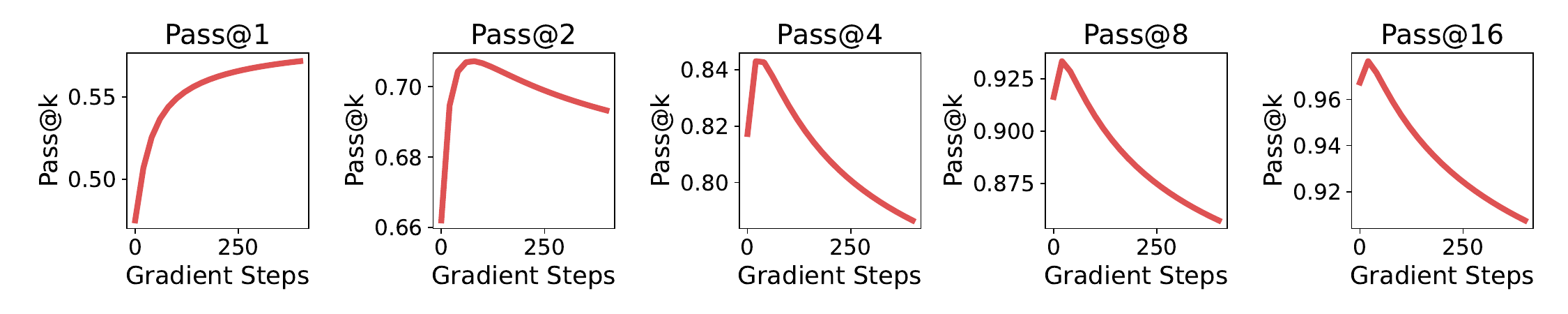}
    \caption{Pass@k across Training in Binary Classification}
    \label{app:fig:pass@k}
\end{figure}
\begin{figure}[h!]
    \includegraphics[width=\textwidth]{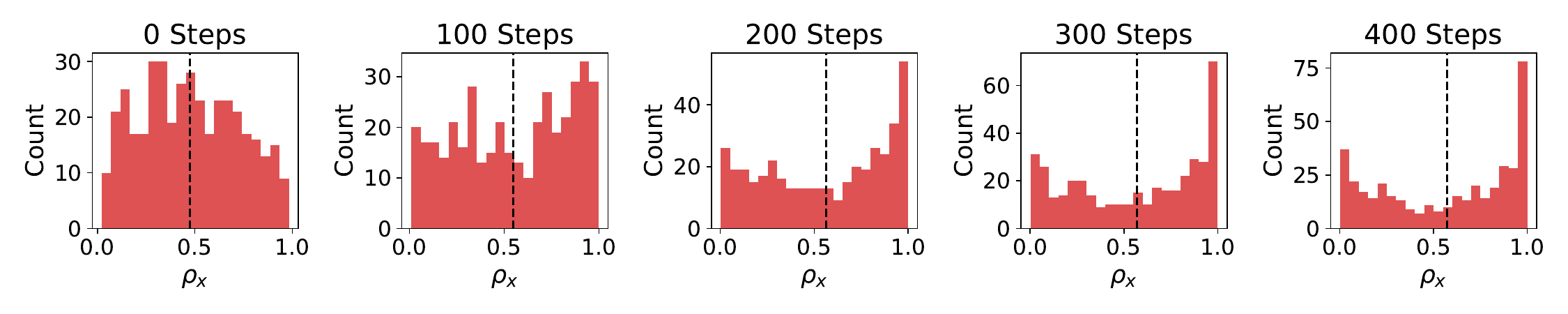}
    \caption{Histogram of $\rho_x$ across training steps}
    \label{fig:binary_bimodal}
\end{figure}

\section{Expected Pass@k}
\label{app:expected_pass@k_proof}
\begin{proposition} 
\begin{align*}
\E_{x,y \sim \mc{D}}\left[\mpassk(x)\right] \leq 1 - ((\E_{x,y \sim \mc{D}}[1 - \rho_x])^2 + \var(\rho_x))^{k/2}
\end{align*}
\end{proposition}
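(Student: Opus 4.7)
The plan is to rewrite the expectation in terms of the raw second moment of $1-\rho_x$ and then apply Jensen's inequality to a power function. First I would use the closed form $\mpassk(x) = 1 - (1-\rho_x)^k$ from Equation~\ref{eq:passk} to move to
\begin{equation*}
\E_{x,y\sim\mc{D}}\left[\mpassk(x)\right] = 1 - \E_{x,y\sim\mc{D}}\left[(1-\rho_x)^k\right],
\end{equation*}
so that the task reduces to \emph{lower-bounding} $\E[(1-\rho_x)^k]$ in terms of the stated bias and variance quantities.

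Next, I would introduce the shorthand $Z := 1 - \rho_x$ and observe the identity
\begin{equation*}
\E[Z^2] = \bigl(\E[Z]\bigr)^2 + \var(Z) = \bigl(\E_{x,y\sim\mc{D}}[1-\rho_x]\bigr)^2 + \var(\rho_x),
\end{equation*}
using $\var(1-\rho_x) = \var(\rho_x)$. So it suffices to prove $\E[Z^k] \geq (\E[Z^2])^{k/2}$. The key step is then Jensen's inequality applied to the convex map $u \mapsto u^{k/2}$ (which is convex for $k \geq 2$), evaluated at the nonnegative random variable $Z^2$:
\begin{equation*}
\E\bigl[(Z^2)^{k/2}\bigr] \geq \bigl(\E[Z^2]\bigr)^{k/2}.
\end{equation*}
Substituting back yields $\E[(1-\rho_x)^k] \geq \bigl((\E[1-\rho_x])^2 + \var(\rho_x)\bigr)^{k/2}$, and subtracting from $1$ gives the proposition.

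There is essentially no calculational obstacle here; the real subtlety is the regime of validity. Jensen in the stated direction requires the exponent $k/2 \geq 1$, so the bound is only meaningful for $k \geq 2$ (which is the regime of interest for test-time scaling anyway). For $k=1$ the inequality reverses via concavity and one just recovers the trivial $\E[\mpass{1}] = \E[\rho_x]$. I would note this briefly in the proof and also point out the intuition it makes transparent: since both $(\E[1-\rho_x])^2$ and $\var(\rho_x)$ enter additively inside a decreasing function of $k$, any reduction in either bias or variance of the per-example error tightens the upper bound on expected $\mpassk$, which is precisely the decomposition the main text uses to contrast WiSE-FT against temperature scaling.
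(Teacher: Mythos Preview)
Your proof is correct and follows essentially the same route as the paper: apply Jensen's inequality to the convex map $u\mapsto u^{k/2}$ on $(1-\rho_x)^2$ to obtain $\E[(1-\rho_x)^k]\ge(\E[(1-\rho_x)^2])^{k/2}$, then expand the second moment as $(\E[1-\rho_x])^2+\var(\rho_x)$. Your explicit remark that the convexity (and hence the stated direction of the bound) requires $k\ge 2$ is a useful clarification that the paper's proof omits.
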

\begin{proof}
\begin{align}
    \E\left[(1 - \rho_x)^k\right] &\geq \E\left[(1 - \rho_X)^2\right]^{k/2} \\ 
    &=\left(1 - 2 \E\left[\rho_x\right] + \E\left[\rho_x^2\right]\right)^{k/2} \\ 
    &= \left(\left(1 - 2 \E\left[\rho_x\right] + \E\left[\rho_x\right]^2\right) + \left(\E\left[\rho_x^2\right] - \E\left[\rho_x\right]^2\right)\right)^{k/2} \\ 
    &= ((1 - \E\left[\rho_x\right])^2 + \msf{Var}(\rho_x))^{k/2}
\end{align}
\end{proof}

\newpage

\section{RL Theory}

\subsection{Overview}

We will prove that in a discrete bandit setting with $K$ equally good arms that is the best arm, both REINFORCE and GRPO without KL regularization will eventually collapse into a single-arm strategy. 

We will further prove that, with KL regularization with respect to the initial policy, the converged policy of REINFORCE have the same action distribution as the initial policy when constrained on the set of best arms. Therefore, diversity within good actions will not increase through REINFORCE training.

\subsection{Notations and Setup}
Formally we consider the following setting. We consider a $K+1$-armed bandit, with arms $\{1,2,\dots,K+1\}$. Arms $1,\dots,K$ are ``good,'' each yielding reward $1$, and the other arm is ``bad,'' yielding reward $0$. We use a softmax parameterization:
\[
p_i \;=\;\frac{e^{\theta_i}}{\sum_{j=1}^{K+1} e^{\theta_j}}, 
\quad i=1,\dots,K+1.
\]
to denote the action distribution. We will use $\theta_i^{(t)}$ to denote the parameter at step $t$.

It is standard to consider using the KL divergence between the current policy with a reference policy (which we set as $p_0$ here) as a regularization term.
\[
\KL(p^{(t)} | p^{(0)}) = \sum_{i = 1}^{K+1} p_i^{(t)} \log \frac{p_i^{(t)}}{p_i^{(0)}}
\]

For REINFORCE, we will consider the following training setup.
At step $t$:
\begin{enumerate}
    \item We sample an arm $I_t$ according to $p(\cdot) = (p_1^{(t)},\dots,p_{K+1}^{(t)})$ and receive reward $r_t$
    \item We update using policy gradient.
    \[
    \theta_i^{(t + 1)} \;=\; \theta_i^{(t)} \;+\; \eta\, r_t \,\nabla_{\theta_i}( \log p_{I_t}^{(t)}) \;-\; \eta \beta \nabla_{\theta_i} \KL(p^{(t)} | p^{(0)}), 
    \quad i=1,\dots,K+1,
    \]
    where $\eta>0$ is the step size and $\beta$ is the hyperparameter controlling the strength of KL regularization.
\end{enumerate}

For GRPO, we will consider the following simplified training setup. This is equivalent to the empirical version of GRPO with online sampling.
\begin{enumerate}
    \item Sample $G$ arms $\{ I_t^{(1)},\dots,I_t^{(G)} \}$ i.i.d.\ from the current policy $p(\cdot)$ and receive rewards $r_t^{(g)}$.
    \item Compute
    \[
    \mu_t \;=\; \frac{1}{G} \sum_{g=1}^G r_t^{(g)}, 
    \quad
    \sigma_t \;=\; \sqrt{
      \frac{1}{G} \sum_{g=1}^G \bigl(r_t^{(g)} - \mu_t \bigr)^2
    },
    \]
    and define the normalized advantage
    \[
    \widetilde{r}_t^{(g)} \;=\;
    \begin{cases}
       \dfrac{r_t^{(g)} - \mu_t}{\sigma_t}, & \sigma_t \neq 0,\\
       0, & \sigma_t = 0.
    \end{cases}
    \]
    We will skip the update if $\sigma_t = 0$.
    \item Update each parameter $\theta_i$ via
    \[
    \theta_i \;\leftarrow\;
    \theta_i 
   \; + \; \frac{\eta}{G} \sum_{g=1}^G \widetilde{r}_t^{(g)} \nabla_{\theta_i}(\log p_{I_t^{(g)}}^{(t)}) \;-\; \eta \beta \nabla_{\theta_i} \KL(p^{(t)} | p^{(0)}). \quad i=1,\dots,K+1,
    \]
\end{enumerate}

\subsection{Implicit Diversity Collapse without KL regularization}

\begin{theorem}[Collapse to Deterministic Policy]
\label{thm:collapse}
Under REINFORCE or GRPO updates without KL regularization ($\beta_0 = 0$), given a sufficient small $\eta$, with probability 1:
\[
\limsup_{t \to \infty} \max_{i \in [K]} p_{i}^{(t)} = 1.
\]
Thus, the policy collapses to a single-arm strategy during training.
\end{theorem}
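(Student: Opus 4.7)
The plan is to combine a stochastic-approximation analysis of the expected-reward submartingale (to eliminate the bad arm) with a pairwise self-reinforcing random-walk analysis (to force symmetry-breaking among the good arms). For REINFORCE, the softmax update $\Delta\theta_i^{(t)} = \eta\, r_t(\mathbf{1}[I_t = i] - p_i^{(t)})$ is stochastic gradient ascent on $J(\theta) = \E[r_I] = 1 - p_{K+1}(\theta)$, so a second-order Taylor expansion combined with bounded-increment control yields $\E[J(\theta^{(t+1)})\mid\mathcal{F}_t] \geq J(\theta^{(t)}) + \tfrac{\eta}{2}\|\nabla J(\theta^{(t)})\|^2$ for sufficiently small $\eta$. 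Since $J\leq 1$, Robbins-Siegmund applied to this bounded submartingale gives $\sum_t\|\nabla J(\theta^{(t)})\|^2 < \infty$ almost surely; combined with the direct bound $\|\nabla J(\theta)\|^2 \geq p_{K+1}^2(1-p_{K+1})^2\cdot(K+1)/K$, this forces $p_{K+1}^{(t)}\to 0$ almost surely, reducing the problem to the symmetry-breaking dynamics on the $K$ good arms.

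The heart of the argument is to show that among the good arms, one eventually dominates. For each pair $i,j\in[K]$, let $U^{(t)}_{ij} := \theta_i^{(t)} - \theta_j^{(t)}$. A direct computation gives the self-reinforcing drift $\E[\Delta U_{ij}^{(t)}\mid\mathcal{F}_t] = \eta\, p_{K+1}^{(t)}(p_i^{(t)} - p_j^{(t)})$ (same sign as $U_{ij}^{(t)}$), bounded increments of order $\eta$, and conditional variance of order $\eta^2(p_i^{(t)} + p_j^{(t)})$. I would then show $|U^{(t)}_{ij}|\to\infty$ almost surely in two steps: first, Doob-type martingale convergence applied to a bounded monotone transform (e.g.\ $\tanh(U_{ij}^{(t)}/2)$, which inherits a submartingale-plus-bounded-variation-drift structure from $U_{ij}$) yields almost-sure convergence of $U_{ij}^{(t)}$ to some $U_\infty \in [-\infty, +\infty]$; second, on the event $\{U_\infty\in\mathbb{R}\}$, both $p_i^{(t)}$ and $p_j^{(t)}$ would stabilize at strictly positive values, but in that regime $\Delta U^{(t)}_{ij}$ takes values of magnitude $\Theta(\eta)$ with probability bounded below by a positive constant, contradicting $\Delta U^{(t)}_{ij}\to 0$ (which is necessary for $U_{ij}^{(t)}$ to converge to a finite limit). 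Hence $|U^{(t)}_{ij}|\to\infty$ almost surely for every pair, and a standard tournament argument identifies a single winning arm $i^\star$ with $\theta_{i^\star}^{(t)} - \theta_j^{(t)}\to+\infty$ for every $j\neq i^\star$, so $p_{i^\star}^{(t)}\to 1$, giving $\limsup_t\max_{i\in[K]}p_i^{(t)} = 1$ almost surely.

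For GRPO, the update $\Delta\theta_i = \tfrac{\eta}{G}\sum_g \tilde r_t^{(g)}(\mathbf{1}[I_t^{(g)} = i] - p_i^{(t)})$ is active only when $\sigma_t \neq 0$ (the batch contains both good and bad samples), and the normalized advantages assign positive weight to good-arm samples in the batch; hence $U^{(t)}_{ij}$ retains the same self-reinforcing-drift and non-vanishing-noise structure, with a bounded state-dependent effective step size, and the same argument applies. The main obstacle is the second step above: formally proving $|U^{(t)}_{ij}|\to\infty$ rather than converging to a finite limit. The delicacy is that as $p_{K+1}^{(t)}\to 0$, the drift of $U_{ij}^{(t)}$ also vanishes, so $U_{ij}$ is asymptotically a martingale with noise that could in principle converge; the saving observation is that the conditional variance of $\Delta U_{ij}^{(t)}$ is of order $\eta^2(p_i^{(t)}+p_j^{(t)})$, which does not vanish unless the good-arm probabilities themselves vanish. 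Making this argument uniform across the pairwise interactions in the $K\geq 3$ case --- in particular handling the scenario where $p_i^{(t)}+p_j^{(t)}$ itself tends to zero before either arm can be declared the pair-winner --- is the technical crux.
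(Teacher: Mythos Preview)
Your high-level decomposition---eliminate the bad arm, then break symmetry among the $K$ good arms---matches the paper, and your bad-arm step is fine (the paper does it even more simply by noting $\theta_{K+1}^{(t)}$ is monotone decreasing). The gap is in the symmetry-breaking step. You assert that $\tanh(U_{ij}^{(t)}/2)$ inherits a submartingale-plus-bounded-variation structure, so Doob gives almost-sure convergence of $U_{ij}^{(t)}$ in $[-\infty,+\infty]$. But once $p_{K+1}^{(t)}\to 0$ the drift $\E[\Delta U_{ij}\mid\mathcal F_t]=\eta\,p_{K+1}^{(t)}(p_i^{(t)}-p_j^{(t)})$ vanishes and $U_{ij}$ becomes a genuine martingale with conditional variance $\Theta(\eta^2 p_i p_j)$. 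Any bounded sigmoidal transform $\phi$ then picks up a second-order correction $\tfrac12\phi''(U_{ij})\,\mathrm{Var}(\Delta U_{ij})$ that is \emph{mean-reverting} (since $\phi''$ has the opposite sign to $\phi$ near the tails), so $\phi(U_{ij})$ is not a submartingale. Indeed, in the two-good-arm case with $p_{K+1}=0$ the diffusion approximation is $dU=\eta\,\mathrm{sech}(U/2)\,dW$; substituting $V=\sinh(U/2)$ gives $dV\approx\tfrac{\eta}{2}dW$ plus a bounded drift, so $V$ behaves like Brownian motion and $U$ is \emph{recurrent}: $\limsup U_{ij}=+\infty$ and $\liminf U_{ij}=-\infty$. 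Your tournament step, which requires each $U_{ij}$ to converge to a definite sign so that a fixed winner $i^\star$ emerges with $p_{i^\star}^{(t)}\to 1$, is therefore aimed at a statement strictly stronger than the theorem and, in this regime, false. (Your Step~2 inference ``$U_\infty\in\mathbb R\Rightarrow p_i,p_j$ stabilize at positive values'' also fails for $K\ge 3$, as you partly note: a third arm can dominate and send both $p_i,p_j\to 0$ while $U_{ij}$ stays finite.)

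The paper sidesteps all of this by proving only what is stated---$\limsup_t\max_{i\in[K]}p_i^{(t)}=1$, not convergence to a fixed arm. Its working quantity is the gap $\Delta^{(t)}=\max_{j\in[K]}\theta_j^{(t)}-\min_{j\in\mathcal I}\theta_j^{(t)}$, which \emph{is} a genuine submartingale (by convexity of $\max$/$\min$, so the second-order correction helps rather than hurts), has bounded increments, and has second-moment increments bounded below by a constant whenever $\max_i p_i^{(t)}\le 1-\epsilon$. A stopping-time argument then bounds $\E[\tau_\epsilon]$, so the gap exceeds any threshold in finite expected time; an induction on $K$ via a coupling to a $(K{-}1)$-arm system (once one arm's probability is small) handles the general case. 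The conceptual contrast: the paper exploits an \emph{unbounded} submartingale with linearly growing second moment to get finite hitting times, whereas you tried to force a \emph{bounded} transform to converge---which the persistent noise among equally-rewarded arms prevents.
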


\begin{proof}
The proof is two-fold.

Using Lemma~\ref{lem:reinforce-phase1} and~\ref{lem:grpo-phase1}, we can show that bad arm probability diminishes,
\[
 \lim_{t \to \infty} p_{K + 1}^{(t)} = 0
\]

We will then define a property named Self-enforcing Stochastic
\begin{definition}[Self-enforcing Stochastic Policy Update Rule] 
\label{def:non-diminish}
We define three properties of policy update rule that will lead to diversity collapse
\begin{enumerate}
    \item The policy update  takes the form of $\sum_{k =1}^B A_k \nabla \log p_i(\theta_{i_k})$ where $i_k$ is the $k$-th sampled arm in the batch and $A_k$ is a function determined by (i) the sum of reward $\sum_{i = 1}^K r_{i_k}$ with in the batch ; (ii) the reward $r_{i_k}$ and (iii) the batch size $B$.
   \item A policy update rule is said to be self-enforcing, if $\E[\theta_{i}^{(t + 1)} - \theta_i^{(t)}]$ is monotonous with $\theta_i^{(t)}$ for all $i \in [K]$ and $t$. Further $\E[\theta_{i}^{(t + 1)} - \theta_i^{(t)}]$ is non-positive if $i \ge K + 1$ and is non-negative if $i \le K$.
   \item A policy update rule is said to be self-enforcing stochastic if it is self-enforcing and there exists constants \( C_1, C_2 > 0 \) such that for any \(\epsilon > 0\), whenever the current policy satisfies \(\max_{i \in [K]} p_i^{(t)} \in [1/2K, 1 - \epsilon]\) (i.e., no single good arm dominates), for $i^* = \arg\max_{i \in [K]} p_i^{(t)} $ the conditional second moment of the parameter updates for every arm \(i \in [K + 1] \) and \( i \neq i^*\)satisfies:  
\[
\E\left[ \left( \left(\theta_i^{(t+1)} - \theta_i^{(t)} \right) -  \left(\theta_{i^*}^{(t+1)} - \theta_{i^*}^{(t)} \right) \right)^2\,\big|\, \theta^{(t)} \right] \geq C_1 \epsilon^2.
\]  
and 
\[
|\theta_i^{(t+1)} - \theta_i^{(t)}| < C_2
\]
\end{enumerate}
\end{definition}
Lemma~\ref{lem:diffuse} shows that for any self-enforcing stochastic policy update rule, the final policy collapses into a single-arm policy.  

Using Lemma~\ref{lem:reinforce-lower-bound-variance} and~\ref{lem:grpo-lower-bound-variance}, we can show that REINFORCE and GRPO are self-enforcing stochastic policy update rules when bad arm probability is lower than $1/2$. The proof is then complete.
\end{proof}

\begin{lemma}[Bad Arm Probability Diminishes Using REINFORCE] 
\label{lem:reinforce-phase1}
Under the REINFORCE algorithm without KL regularization ($\beta = 0$), $\lim_{t \to \infty} p_{K + 1}^{(t)} = 0$ almost surely.
\end{lemma}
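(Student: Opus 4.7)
The plan is to build a monotone adapted potential $Y_t$ whose divergence forces $p_{K+1}^{(t)} \to 0$, and then to show $Y_t \to \infty$ almost surely. Unpacking the softmax REINFORCE update via $\nabla_{\theta_i}\log p_{I_t} = \mathbb{1}[i = I_t] - p_i$, we have $\theta_i^{(t+1)} - \theta_i^{(t)} = \eta r_t(\mathbb{1}[i=I_t] - p_i^{(t)})$. Because the bad arm carries zero reward, $\theta_{K+1}^{(t)}$ is non-increasing and in fact decreases by exactly $\eta p_{K+1}^{(t)}$ whenever a good arm is pulled. I would then set
\[
Y_t \;:=\; \bar\theta^{(t)} - \theta_{K+1}^{(t)}, \qquad \bar\theta^{(t)} := \tfrac{1}{K}\sum_{i\le K}\theta_i^{(t)},
\]
and a short calculation gives $Y_{t+1} - Y_t = \tfrac{(K+1)\eta}{K}\,p_{K+1}^{(t)}\,\mathbb{1}[I_t \le K] \ge 0$, so $Y_t$ is non-decreasing and converges a.s.\ to some $Y_\infty \in [Y_0,\infty]$. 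Jensen applied to $\sum_{i\le K} e^{\theta_i} \ge K e^{\bar\theta}$ yields the critical bound $p_{K+1}^{(t)} \le (1 + K e^{Y_t})^{-1}$, which immediately gives $p_{K+1}^{(t)} \to 0$ on $\{Y_\infty = \infty\}$ and, because $Y_t \ge Y_0$ for every sample path, also supplies the deterministic uniform bound $p_{K+1}^{(t)} \le p_{\max} := (1 + K e^{Y_0})^{-1} < 1$ valid for all $t$.

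It remains to handle $E := \{Y_\infty < \infty\}$. Telescoping the increment identity gives $\sum_{t\ge 0} p_{K+1}^{(t)}\mathbb{1}[I_t \le K] = \tfrac{K}{(K+1)\eta}(Y_\infty - Y_0) < \infty$ a.s.\ on $E$. I claim this already forces $p_{K+1}^{(t)} \to 0$ on $E$. Otherwise, there is some $c > 0$ for which $F := E \cap \{\limsup_t p_{K+1}^{(t)} \ge c\}$ has positive probability. Consider the adapted events $B_t := \{p_{K+1}^{(t)} \ge c\} \cap \{I_t \le K\}$. On $F$ the indicator $\mathbb{1}[p_{K+1}^{(t)} \ge c]$ equals $1$ infinitely often, and
\[
P(B_t \mid \mathcal{F}_t) \;=\; \mathbb{1}[p_{K+1}^{(t)} \ge c]\,(1 - p_{K+1}^{(t)}) \;\ge\; (1 - p_{\max})\,\mathbb{1}[p_{K+1}^{(t)} \ge c],
\]
so $\sum_t P(B_t \mid \mathcal{F}_t) = \infty$ on $F$. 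L\'evy's conditional second Borel--Cantelli lemma then yields $\sum_t \mathbb{1}_{B_t} = \infty$ on $F$, producing infinitely many summands of size at least $c$ in the a.s.-convergent sum $\sum_t p_{K+1}^{(t)}\mathbb{1}[I_t \le K]$ --- a contradiction. Letting $c$ range over $\{1/n\}_{n\ge 1}$ rules out every positive value of $\limsup_t p_{K+1}^{(t)}$ on $E$, so $p_{K+1}^{(t)} \to 0$ on $E$ as well, completing the proof.

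The main obstacle is the final Borel--Cantelli step: the events $\{p_{K+1}^{(t)} \ge c\}$ and $\{I_t \le K\}$ are strongly correlated across time through the random parameter trajectory, so the independent form of the lemma does not apply. L\'evy's adapted version is exactly the right tool, requiring only summability of the conditional probabilities $P(B_t \mid \mathcal{F}_t)$, which is where the uniform separation $p_{\max} < 1$ (itself a free consequence of $Y_t \ge Y_0$) becomes essential. Unlike the GRPO analysis and Theorem~\ref{thm:collapse}, no smallness hypothesis on $\eta$ is needed here: the monotonicity of $\theta_{K+1}^{(t)}$ and $Y_t$ holds for every $\eta > 0$.
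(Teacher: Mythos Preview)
Your proof is correct and follows essentially the same route as the paper's: both rely on the monotonicity of $\theta_{K+1}^{(t)}$ (equivalently of your $Y_t$, since under the invariant $\sum_i \theta_i^{(t)}=\text{const}$ one has $Y_t=-\tfrac{K+1}{K}\theta_{K+1}^{(t)}+\text{const}$) and then contradict the softmax bound if $p_{K+1}^{(t)}\not\to 0$. The paper handles the stochastic part informally by re-indexing time to the good-arm pulls (``WLOG $I_t\le K$''), whereas you make that step rigorous via the uniform bound $p_{K+1}^{(t)}\le p_{\max}<1$ together with L\'evy's conditional Borel--Cantelli; this is a genuine improvement in rigor, not a different strategy. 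One tiny slip: $\limsup_t p_{K+1}^{(t)}\ge c$ does not literally give $\{p_{K+1}^{(t)}\ge c\}$ infinitely often (consider $c-1/t$); take $F:=E\cap\{\limsup_t p_{K+1}^{(t)}>c\}$ instead, which your union over $c=1/n$ effectively does anyway.
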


\begin{proof}

We can first simplify the REINFORCE update rule to
\[
\theta_i^{(t + 1)} = \theta_i^{(t)} + \eta r_t(\1(I_t = i) - p_i^{(t)}), \quad i = 1, ...,K+1.
\]

Noted that $\sum_{i} \theta_i^{(t)}$ will not change with $t$, WLOG, assume 
\[
\sum_{i} \theta_i^{(t)} = 0.
\]

Because $r_{K + 1} = 0$, we can then assume without loss of generality, for all $t$, $I_t \le K$.

This then suggests that
\[
\theta_{K + 1}^{(t + 1)} = \theta_{K + 1}^{(t)} - \eta p_{K + 1}^{(t)}
\]
monotonically decrease.

For any $\epsilon$, if $p_{K+1}^{(t)} > \epsilon$ holds for infinite $t$, then there exists $t_0$, where $\theta_{K + 1}^{t} < \log \epsilon$ for any $t > t_0$.
For any $t > t_0$, there exists $i \in [K]$, such that $\theta_i^{(t)} > 0$.
This then suggests that
\begin{align*}
 p_{K + 1}^{(t)} \le \exp(\theta_{K + 1}^{(t)} - \theta_i^{(t)}) \le \epsilon.
\end{align*}
This leads to a contradiction. The proof is then complete.
\end{proof}

\begin{lemma}[Bad Arm Probability Diminishes Using GRPO] 
\label{lem:grpo-phase1}
Under the GRPO algorithm without KL regularization ($\beta = 0$),$\lim_{t \to \infty} p_{K + 1}^{(t)} = 0$ almost surely.
\end{lemma}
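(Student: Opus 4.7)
The plan is to mirror the REINFORCE argument in Lemma~\ref{lem:reinforce-phase1} by tracking $\theta_{K+1}^{(t)}$ under the GRPO update: first I would establish that it is monotone non-increasing with a decrement bounded strictly away from zero whenever any update occurs, and then case-split on whether it diverges to $-\infty$.

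I would begin by computing the GRPO update on $\theta_{K+1}$ in closed form. Let $n_g$ and $n_b = G - n_g$ denote the numbers of good ($I_t^{(g)} \le K$) and bad ($I_t^{(g)} = K+1$) arms drawn in a batch. Since rewards are binary, whenever $\sigma_t \ne 0$ (equivalently $1 \le n_g \le G-1$) one gets $\mu_t = n_g/G$, $\sigma_t = \sqrt{n_g n_b}/G$, and normalized advantage $\sqrt{n_b/n_g}$ for each good sample and $-\sqrt{n_g/n_b}$ for each bad sample. Plugging these and $\nabla_{\theta_{K+1}}\log p_j^{(t)} = \1(j=K+1) - p_{K+1}^{(t)}$ into the GRPO update, the $p_{K+1}^{(t)}$-dependent terms cancel cleanly, yielding
\[
\theta_{K+1}^{(t+1)} - \theta_{K+1}^{(t)} \;=\; -\frac{\eta}{G}\sqrt{n_g n_b},
\]
while the update is skipped when $\sigma_t = 0$. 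Thus $\theta_{K+1}^{(t)}$ is monotone non-increasing, and every non-zero update decreases it by at least $\eta\sqrt{G-1}/G$. A one-line calculation using $\sum_i \nabla_{\theta_i}\log p_j^{(t)} = 0$ shows the update preserves $\sum_i \theta_i^{(t)}$, so WLOG I take $\sum_i \theta_i^{(t)} = 0$.

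Next I would split into two cases. If $\theta_{K+1}^{(t)} \to -\infty$, the sum-zero constraint forces $\max_{i \in [K]} \theta_i^{(t)} \ge -\theta_{K+1}^{(t)}/K \to +\infty$, giving $p_{K+1}^{(t)} \le \exp\!\bigl(\theta_{K+1}^{(t)} - \max_{i \in [K]} \theta_i^{(t)}\bigr) \to 0$. Otherwise $\theta_{K+1}^{(t)}$ is bounded below, and since every non-zero update drops it by a fixed positive constant, at most finitely many updates occur on this event. So there is a random last update time $T$ after which the parameter vector is frozen at a finite state with $p_{K+1}^{*} := p_{K+1}^{(T+1)} \in (0,1)$; for $t > T$ the batches are drawn i.i.d.\ from the frozen softmax and $\Pr(\sigma_t = 0) = (p_{K+1}^{*})^G + (1-p_{K+1}^{*})^G < 1$, so by Borel--Cantelli $\sigma_t \ne 0$ occurs infinitely often almost surely, contradicting $T < \infty$. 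Hence this case has probability zero and $p_{K+1}^{(t)} \to 0$ a.s.

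The hard part will be making the ``frozen after $T$'' heuristic rigorous, since $T$ is a random (a priori unbounded) stopping time with respect to the sampling filtration. I expect this to be handled cleanly via a conditional Borel--Cantelli argument on $\mathcal{F}_{T+1}$, combined with the fact that the post-$T$ batches are i.i.d.\ conditional on the frozen (but $\mathcal{F}_{T+1}$-measurable) parameter vector.
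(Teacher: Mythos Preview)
Your proposal is correct and in fact more explicit than the paper's own argument, which is a two-sentence sketch: the paper only observes that $\tilde r_t^{(g)}<0$ iff $I_t^{(g)}=K+1$ so $\theta_{K+1}^{(t)}$ is non-increasing, then asserts that whenever $p_{K+1}^{(t)}>\epsilon$ the \emph{expected} decrement is bounded below by a constant depending on $\epsilon$, and defers the rest to ``the same line as Lemma~\ref{lem:reinforce-phase1}''. Your route is genuinely different in that you compute the decrement in closed form, $\theta_{K+1}^{(t+1)}-\theta_{K+1}^{(t)}=-\tfrac{\eta}{G}\sqrt{n_g n_b}$, and exploit that this has a \emph{state-independent} lower bound $\eta\sqrt{G-1}/G$ on every non-skipped step. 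This converts the problem from ``expected decrease when $p_{K+1}$ is large'' to the cleaner dichotomy ``either infinitely many updates and hence $\theta_{K+1}\to-\infty$, or finitely many updates'', and then rules out the second case by a conditional Borel--Cantelli argument (L\'evy's extension: $\{\sigma_t\neq 0\ \text{i.o.}\}=\{\sum_t \Pr(\sigma_t\neq 0\mid\mathcal F_{t-1})=\infty\}$ a.s.). This is more elementary and avoids the gap in the paper's sketch where a stochastic decrement is plugged into a deterministic-style contradiction; what the paper's approach would buy, if fleshed out, is that it does not need the closed form or the uniform lower bound on the decrement size. Your identification of the ``frozen after $T$'' subtlety and its resolution via conditional Borel--Cantelli on $\mathcal F_t$ is exactly right.
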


\begin{proof}
For GRPO, we can show that $\tilde r_t^{(g)}$ is negative iff $I_t^{(g)} = K + 1$. Therefore, we can show that $\theta_{K + 1}^{(t)}$ monotonically decreases, similar to the case in REINFORCE.

If $p_{K+1}^{(t)} > \epsilon$ holds for some $t$, one can prove that 
$\theta_{K + 1}^{(t)}$ will decrease by a constant depending on $\epsilon$ in expectation. Therefore, following the same line as in~\ref{lem:reinforce-phase1}, we can prove that $\lim_{t \to \infty} p_{K + 1}^{(t)} = 0$ almost surely.
\end{proof}

\begin{lemma}[Collapse Happens for All Self-enforcing Stochastic Policy Update Rule] 
\label{lem:diffuse}
Consider a policy update process that  is self-enforcing stochastic (Definition~\ref{def:non-diminish}), then $\limsup_{t \to \infty} \max_{i \in [K]} p_i^{(t)} = 1$ almost surely.
\end{lemma}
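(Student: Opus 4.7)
The plan is a proof by contradiction. Suppose on an event $E$ of positive probability one has $\limsup_t M_t \le 1-\epsilon$ for some $\epsilon>0$, where $M_t := \max_{i\in[K]} p_i^{(t)}$. By Lemmas~\ref{lem:reinforce-phase1}--\ref{lem:grpo-phase1}, $p_{K+1}^{(t)} \to 0$ almost surely, so on $E$ there is a random $T$ after which $p_{K+1}^{(t)} < \epsilon/2$ and $M_t \le 1-\epsilon/2$. The pigeonhole bound $M_t \ge (1-p_{K+1}^{(t)})/K \ge 1/(2K)$ then places the process in the regime where the variance lower bound in Property~(3) of Definition~\ref{def:non-diminish} applies for every $t \ge T$ on $E$.

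The core of the argument introduces the potential $\Phi_t := \sum_{i<j,\, i,j\in[K]} (\theta_i^{(t)} - \theta_j^{(t)})^2$ and shows it is a submartingale with strictly positive drift. Writing $\delta_i := \theta_i^{(t+1)} - \theta_i^{(t)}$ and expanding each squared pairwise gap gives
\[
\E\bigl[(\theta_i^{(t+1)} - \theta_j^{(t+1)})^2 - (\theta_i^{(t)} - \theta_j^{(t)})^2 \,\big|\, \mc{F}_t\bigr] = 2(\theta_i^{(t)} - \theta_j^{(t)})\, \E[\delta_i - \delta_j \mid \mc{F}_t] + \E[(\delta_i - \delta_j)^2 \mid \mc{F}_t].
\]
The self-enforcing property combined with the symmetry of the update rule across equally-rewarded good arms forces $\E[\delta_i - \delta_j \mid \mc{F}_t]$ to have the same sign as $\theta_i^{(t)} - \theta_j^{(t)}$, so the cross term is non-negative. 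Summing over pairs and applying the variance lower bound to the $K-1$ pairs $(i^*_t, j)$ with $j \ne i^*_t$ yields $\E[\Phi_{t+1} - \Phi_t \mid \mc{F}_t] \ge (K-1)\,C_1\epsilon^2$ on $E$, so $\Phi_t \to \infty$ almost surely on $E$.

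The contradiction then comes from the fact that $\Phi_t$ is a sum of only $\binom{K}{2}$ non-negative terms, so divergence forces some pair $(i,j)$ of good arms with $|\theta_i^{(t)} - \theta_j^{(t)}| \to \infty$, which in turn forces $\min(p_i^{(t)}, p_j^{(t)}) \to 0$. On the other hand $M_t \le 1-\epsilon/2$ together with $p_{K+1}^{(t)} \to 0$ requires at least two good arms to retain non-vanishing probability on $E$, so some good arm $j_1$ must have $p_{j_1}^{(t)} \to 0$. I then plan to iterate the argument on the surviving $K-1$ good arms: the update rule satisfies the same hypotheses there since eventually $i^*_t$ lies in the surviving subset and the variance lower bound is preserved for pairs within that subset. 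An induction on the number of surviving good arms eliminates arms one by one until only a single arm retains mass, forcing $M_t \to 1$ on $E$ and contradicting $M_t \le 1-\epsilon/2$.

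The main obstacle is making the elimination/induction step fully rigorous: once certain arms have vanishing probability one must confirm that the variance lower bound continues to apply to $\theta$-gaps among the surviving arms, and that the bounded-increment clause of Property~(3) legitimizes the submartingale divergence conclusion by ruling out pathological single-step behavior. A cleaner ending that avoids the induction is to track directly the pair (argmax $i^*_t$, runner-up $j_t$): on $E$ the top-two $\theta$-gap is bounded by $\log(2(K-1)/\epsilon)$, yet by Property~(3) its incremental variance is at least $C_1\epsilon^2$, yielding a bounded submartingale with strictly positive drift, an impossibility.
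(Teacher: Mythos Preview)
Your approach differs from the paper's in a meaningful way. The paper proceeds by induction on $K$: for the base case $K=2$ it tracks the single gap $\Delta^{(t)}=|\theta_1^{(t)}-\theta_2^{(t)}|$, shows $(\Delta^{(t)})^2$ has drift $\ge C_1\epsilon^2$ while $\Delta^{(t)}<\log(2/\epsilon)$ on the non-collapse event, and bounds $\E[\tau_\epsilon]$ by optional stopping; for $K>2$ it uses a coupling argument to compare the $K$-arm process to a $(K-1)$-arm process once some arm has tiny probability. Your potential $\Phi_t=\sum_{i<j}(\theta_i^{(t)}-\theta_j^{(t)})^2$ is a natural generalization that directly absorbs the variance lower bound into positive drift without any base case, which is elegant.

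However, the contradiction step is where the real gap lies. As you implicitly acknowledge, $\Phi_t\to\infty$ does \emph{not} contradict $M_t\le 1-\epsilon/2$: take $K=3$ with $\theta_1=\theta_2=N$, $\theta_3=-2N$; then $p_1=p_2\approx 1/2$, $M_t\approx 1/2$, yet $\Phi_t\approx 18N^2$ is arbitrarily large. So you are forced into an elimination/induction, which you have not made rigorous (you would need, at minimum, that once $p_{j_1}^{(t)}$ becomes small it \emph{stays} small, and that the restricted process is again self-enforcing stochastic with the \emph{same} constants --- neither follows from Definition~\ref{def:non-diminish}). This is essentially the obstacle the paper's coupling argument is designed to handle.

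Your proposed ``cleaner ending'' via the top-two gap $G_t=\theta^{(t)}_{i^*_t}-\theta^{(t)}_{j_t}$ does enjoy the upper bound $G_t\le\log(4(K-1)/\epsilon)$ you state, but it fails at the other end: because $i^*_t$ and $j_t$ are time-varying, $G_{t+1}-G_t$ is \emph{not} equal to $\delta_{i^*_t}-\delta_{j_t}$, so neither the submartingale property nor the second-moment lower bound in Property~(3) (which is stated for a fixed $i$ against the current $i^*$) transfers to $G_t$. For $K=2$ this issue disappears because $G_t=|\theta_1-\theta_2|$ is symmetric --- and indeed that is exactly the paper's base case --- but for $K\ge 3$ you would need a separate argument.

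A smaller point: invoking Lemmas~\ref{lem:reinforce-phase1}--\ref{lem:grpo-phase1} is illegitimate here, since those are specific to REINFORCE/GRPO while Lemma~\ref{lem:diffuse} is stated for an arbitrary self-enforcing stochastic rule. The lower bound $M_t\ge 1/(2K)$ that you need to activate Property~(3) should be handled without them.
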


\begin{proof}

We will inductively prove that for different $K$ the following induction hypotheses, for any $\epsilon, \delta > 0$, there exists $T_{\epsilon, \delta, K} > 0$, 

\begin{align*}
    \Pr(\max_{t < T_{\epsilon, \delta, K}} \max_{i \in [K]} p_i^{(t)} <  1 - \epsilon ) < \delta.
\end{align*}

We first consider the case where $K = 2$.

Consider the stopping time, 
\begin{align*}
    \tau_{\epsilon} = \arg \min_t \max_{i \in [K]} p_i^{(t)} >  1 - \epsilon
\end{align*}

For any $\mathcal{I} = \{1, 2\} $,define $\Delta_{\mathcal{I}}^t = \max_{j \in [K]} \theta_{j}^t - \min_{j \in \mathcal{I}} \theta_i^t$.

Assume $\theta_{i*}^t = \max_{j \in [K]} \theta_{j}^t $, because $|\mathcal{I}| \ge 2$, there exists $i \neq i^*$, $\min_{j \in \mathcal{I}} \theta_i^t > 0$. We will show three properties of $\Delta_{\mathcal{I}}^t$

First $\Delta_{\mathcal{I}}^{(t)}$ is a submartingale defined on the filtration of the distribution of $\theta^{(t)}$ because

\begin{align*}
    \E[\Delta_{\mathcal{I}}^{(t)} \mid \theta_t]- \Delta_{\mathcal{I}}^{(t - 1)} >  \E[(\theta_{i^*}^{t+1} - \theta_{i^*}^t) - (\theta_i^{t + 1} - \theta_i^t) \mid \theta_t]> 0.
\end{align*}

as the policy is self-enforcing.

Further $\Delta_{\mathcal{I}}^{(t)}$ has bounded growth of 2$C_2$ as 
\begin{align*}
   | \max_{j \in [K]} \theta_{j}^{t + 1} - \max_{j \in [K]} \theta_{j}^t | < C_2. \\
   | \min_{j \in \mathcal{I}} \theta_{j}^{t + 1} - \max_{j \in \mathcal{I}} \theta_{j}^t | < C_2. \\
\end{align*}

Furthermore, the second-momentum of $\Delta_{\mathcal{I}}^{(t)}$ needs to increase with $t$ by a constant for any $t < \tau_{\epsilon}$.
\begin{align*}
    \E[(\Delta_{\mathcal{I}}^{(t + 1)})^2 \mid \theta_t] 
    &\ge  (\Delta_{\mathcal{I}}^{(t)})^2  +  \E[(\Delta_{\mathcal{I}}^{(t + 1)} - \Delta_{\mathcal{I}}^{(t)}))^2 \mid \theta_t] \\
    &\ge (\Delta_{\mathcal{I}}^{(t)})^2 + C_1 \epsilon^2.
\end{align*}

When $t < \tau_{\epsilon}$, it holds that $\Delta_{\mathcal{I}}^{(t)} < \log \frac{2}{\epsilon}$, otherwise we can prove that 
\begin{align*}
    \max_{i, j \in \{1, 2\}} p_i / p_j = \exp(\Delta_{\mathcal{I}}^{(t)}) > \frac{2 - 2\epsilon}{\epsilon} . \implies \max_{i \in \{1, 2\}} p_i > 1 -  \epsilon.
\end{align*}
This is a contradiction. Further, by Martingale inequality, we have that
\begin{align*}
    \E[\left(\Delta^{\min\{t, \tau_{\epsilon} \}}\right)^2 ] > \E[\left(\Delta^{0}\right)^2 ] + C_1 \epsilon^2 \E[\min\{t, \tau_{\epsilon}\}]
\end{align*}

Further, as $\Delta^t$ has bounded growth, we have that
\begin{align*}
    \E[\left(\Delta^{\min\{t, \tau_{\epsilon} \}}\right)^2 ] < (\log \frac{2}{\epsilon} + 2C_2)^2.
\end{align*}

This implies $\E[\min\{t, \tau_{\epsilon}\}] < \frac{(\log \frac{2}{\epsilon} + 2C_2)^2}{C_1\epsilon^2}$ for all $t$, this implies
\begin{align*}
    \E[{\tau_{\epsilon}}] < \frac{(\log \frac{2}{\epsilon} + 2C_2)^2}{C_1\epsilon^2}.
\end{align*}

Further, by Markov inequality, if we choose
\begin{align*}
    T_{\epsilon, \delta, 2} = \frac{(\log \frac{2}{\epsilon} + 2C_2)^2}{C_1\epsilon^2 \delta}.
\end{align*}
then,
\begin{align*}
    \Pr(\tau_{\epsilon} > T_{\epsilon, \delta,2}) < \frac{\E[{\tau_{\epsilon}}]}{ T_{\epsilon, \delta,2}} < \delta.
\end{align*}

This concludes the proof for $K = 2$.

Now assuming the result holds for $K - 1$ and consider the case for $K$, First, we choose a small enough constant $C_{\delta, \epsilon, K, N} > 0$, such that when $p_{K - 1}^{(0)} < C_{\delta, \epsilon, K, N}$, the following two random processes are close:
\begin{itemize}
    \item Running the algorithm for $N$ steps on the $K$ arms bandit yields $\theta_i^{(t)}, i \in [K]$
    \item  Running the algorithm for $N$ steps on a $K - 1$ arms bandit yields $\tilde \theta_i^{(t)}, i \in [K - 1]$ with $\tilde \theta_i^{(0)} = \theta_i^{(0)} , i < K-1$ and $\tilde \theta_{K -1}^{(0)} = \theta_K(0)$
\end{itemize}
and there exists a joint measure on $\theta$ and $\tilde \theta$ such that
\begin{align*}
    \forall i \in [K - 2] , t < N, \Pr(|p_i^t - \tilde p_i^t| > \epsilon / 2) < \delta / 6 . \\
     \Pr(|p_{K}^t - \tilde p_{K - 1}^t| > \epsilon / 2) < \delta / 6. \\
     \Pr(|p_{K}^t - p_{K}^0| > \epsilon / 2) < \delta / 6.
\end{align*}

This joint measure is constructed by choosing the corresponding arm for two process at each sampling step as long as the sampled arm is not $K$ and uses the uniform convergence on $\nabla \log_{\theta} p_i$. Now following the same argument at $K = 2$, we can show that there exists $\tilde T_{\epsilon, \delta, K}$ such that
\begin{align*}
    \Pr(\exists t < \tilde T_{\epsilon, \delta, K}, \min_{t \in [K]} p_t < C_{\delta, \epsilon, K, T_{\epsilon/2, \delta/2, K - 1}}) > 1 -  \delta/2.
\end{align*}

Then we can invoke the induction hypothesis and uses the coupling shown above to show that if we choose $T_{\epsilon, \delta, K} = \tilde T_{\epsilon, \delta, K} + T_{\epsilon/2, \delta/2, K - 1}$, then there exists a time step that one arm has probability higher than $1 - \epsilon$ with probability at least $1 - \delta$.

\end{proof}

\begin{lemma}
\label{lem:reinforce-lower-bound-variance}
The REINFORCE algorithm without KL regularization ($\beta = 0$) is self-enforcing stochastic (Definition~\ref{def:non-diminish}) once $p_{K + 1}^{(t)} < 1/2$.
\end{lemma}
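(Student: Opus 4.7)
The plan is to verify the three clauses of Definition~\ref{def:non-diminish} directly for the single-sample REINFORCE update $\Delta_i := \theta_i^{(t+1)} - \theta_i^{(t)} = \eta r_t(\mathbf{1}(I_t = i) - p_i^{(t)})$. Clause (1) is immediate: with batch size $B=1$ and $A_1 = r_t$, the update matches the prescribed form $\eta A_1 \nabla_{\theta_i} \log p_{I_t}$, since $\nabla_{\theta_i} \log p_{I_t}^{(t)} = \mathbf{1}(I_t=i) - p_i^{(t)}$. For the sign requirement in clause (2), I would take the expectation over $I_t$ to obtain $\E[\Delta_i \mid \theta^{(t)}] = \eta\, p_i^{(t)}(r_i - \bar r)$ with $\bar r = \sum_j p_j^{(t)} r_j = 1 - p_{K+1}^{(t)}$. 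For any good arm $i \in [K]$ we have $r_i = 1$, giving $\E[\Delta_i \mid \theta^{(t)}] = \eta\, p_i^{(t)} p_{K+1}^{(t)} \ge 0$, while for the bad arm $r_{K+1} = 0$ yields $\E[\Delta_{K+1} \mid \theta^{(t)}] = -\eta\, p_{K+1}^{(t)}(1 - p_{K+1}^{(t)}) \le 0$, as required.

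The bounded-step half of clause (3) is immediate from $|\Delta_i| \le \eta$, so we take $C_2 = \eta$. The substantive step is the conditional second-moment lower bound for $i \ne i^* := \arg\max_{j \in [K]} p_j^{(t)}$. I would expand
\[
\Delta_i - \Delta_{i^*} = \eta r_t \bigl[(\mathbf{1}(I_t=i) - \mathbf{1}(I_t=i^*)) - (p_i - p_{i^*})\bigr]
\]
and condition on the value of $I_t$. The key observation is that on the event $\{I_t = i^*\}$---which has probability $p_{i^*}$ and forces $r_t = 1$ since $i^* \in [K]$---the difference collapses to the deterministic value $-\eta(1 + p_i - p_{i^*})$, whose magnitude is at least $1 - p_{i^*} \ge \epsilon$ (using $p_i \ge 0$ together with the regime hypothesis $p_{i^*} \le 1 - \epsilon$). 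Retaining only this contribution in the second moment yields
\[
\E\bigl[(\Delta_i - \Delta_{i^*})^2 \,\big|\, \theta^{(t)}\bigr] \ge p_{i^*}\, \eta^2 (1 - p_{i^*})^2 \ge \frac{\eta^2 \epsilon^2}{2K},
\]
and the bound is uniform over all $i \in [K+1] \setminus \{i^*\}$ (good or bad), since the argument only uses $p_i \ge 0$.

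The final inequality is the precise place where the hypothesis $p_{K+1}^{(t)} < 1/2$ enters: it forces $\sum_{j \in [K]} p_j^{(t)} > 1/2$, hence $p_{i^*} \ge 1/(2K)$, giving the constant $C_1 = \eta^2/(2K)$. I expect no real technical obstacles beyond correctly isolating the right term; the one delicate point is to avoid expanding the full square (which carries cancellation between the three cases for $I_t$) and instead discard all nonnegative contributions except the $\{I_t = i^*\}$ event, which is the unique configuration guaranteed to produce a macroscopic jump in $\Delta_i - \Delta_{i^*}$ uniformly in $\theta^{(t)}$ within the stated regime.
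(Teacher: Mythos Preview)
Your proposal is correct and follows essentially the same approach as the paper: both isolate the single event $\{I_t = i^*\}$ to lower-bound the conditional second moment by $p_{i^*}\,\eta^2(1 - p_{i^*})^2$, then combine $p_{i^*} \ge 1/(2K)$ with $1 - p_{i^*} \ge \epsilon$ to obtain the constant. Your explicit derivation of $p_{i^*} \ge 1/(2K)$ from the hypothesis $p_{K+1}^{(t)} < 1/2$ and your constant $C_1 = \eta^2/(2K)$ are in fact slightly more careful than the paper's own writeup (which records $C_1 = \eta/2K$, evidently dropping a factor of $\eta$).
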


\begin{proof}

The REINFORCE algorithm is self-enforcing because
\begin{align*}
    \E[\theta_i^{(t + 1)} - \theta_i^{(t)}] = \eta p_i (r_i - \sum_{j \in [K + 1]} p_j r_j).
\end{align*}

Further, 
\begin{align*}
    |\theta_i^{(t + 1)} - \theta_i^{(t)} | \le 1
\end{align*}
and if we consider the distribution of $\Delta_{i, i^*, t} = \frac{\left(\theta_i^{(t+1)} - \theta_i^{(t)} \right) -  \left(\theta_{i^*}^{(t+1)} - \theta_{i^*}^{(t)} \right)}{\eta}$, it holds that
\begin{align*}
    \Delta_{i, i^*, t} = r_{I_t} \left( \1(i = I_t) - \1(i^* = I_t) - p_i + p_{i^*} \right)
\end{align*}

\begin{align*}
\prob{}{\Delta_{i, i^*, t} = -1 - p_i + p_i^{*}} &\ge \prob{}{I_t = i^*} = p_{i^*}
\end{align*}

Therefore 
\begin{align*}
    \E[\Delta_{i, i^*, t}^2] &\ge p_{i^*}( -1 - p_i + p_i^{*})^2 \\
    &\ge p_{i^*} (1 - p_{i^*})^2 \ge \frac{\epsilon^2}{2K}.
\end{align*}
This then concludes the proof with $C_1 = \eta/2K$ and $C_2 = \eta$.
\end{proof}

\begin{lemma}
\label{lem:grpo-lower-bound-variance}
The GRPO algorithm without KL regularization ($\beta = 0$) is self-enforcing stochastic (Definition~\ref{def:non-diminish}) once $p_{K + 1}^{(t)} < 1/2$.
\end{lemma}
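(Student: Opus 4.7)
The plan is to verify the three clauses of Definition~\ref{def:non-diminish} for the GRPO update with $\beta=0$ in the regime $p_{K+1}^{(t)}<1/2$, mirroring the structure of the proof of Lemma~\ref{lem:reinforce-lower-bound-variance}. Using the softmax identity $\nabla_{\theta_i}\log p_j = \1(i=j) - p_i$ together with the GRPO normalization $\sum_g \tilde r_t^{(g)} = 0$ on non-skipped steps, the update simplifies to
\[
\Delta_i \;:=\; \theta_i^{(t+1)} - \theta_i^{(t)} \;=\; \frac{\eta}{G}\sum_{g=1}^G \tilde r_t^{(g)}\,\1(I_t^{(g)}=i).
\]
Letting $n\in\{0,\dots,G\}$ denote the number of good samples in the batch, binary rewards force $\tilde r_{\mrm{good}} = \sqrt{(G-n)/n}$ and $\tilde r_{\mrm{bad}} = -\sqrt{n/(G-n)}$ whenever $n\in\{1,\dots,G-1\}$, and the step is skipped otherwise. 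Clause 1 then holds with $A_g=(\eta/G)\tilde r_t^{(g)}$, which depends solely on the batch's reward profile.

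For Clause 2 I would compute $\E[\Delta_i]$ by summing over multinomial batch compositions. By the symmetry of softmax among good arms and the positivity of $\tilde r_{\mrm{good}}$, one obtains $\E[\Delta_i] = \eta\,p_i\,\phi$ for $i\in[K]$ with $\phi\geq 0$, and symmetrically $\E[\Delta_{K+1}] = \eta\,p_{K+1}\,\psi$ with $\psi\leq 0$. Monotonicity of $\E[\Delta_i]$ in $\theta_i^{(t)}$ follows because $p_i$ is monotone in $\theta_i^{(t)}$ while $\phi$ and $\psi$ depend only on the aggregate good-vs-bad probability split. For the bounded-update part of Clause 3, the algebraic identity
\[
\sum_{g=1}^G |\tilde r_t^{(g)}| \;=\; n\sqrt{(G-n)/n} + (G-n)\sqrt{n/(G-n)} \;=\; 2\sqrt{n(G-n)} \;\leq\; G
\]
immediately gives $|\Delta_i|\leq \eta$, so one can take $C_2=\eta$.

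The main obstacle is the second-moment lower bound $\E[(\Delta_i-\Delta_{i^*})^2 \mid \theta^{(t)}]\geq C_1\epsilon^2$ for $i\neq i^*$. The plan is to exhibit one concrete batch event on which the squared difference is $\Omega(\eta^2/G^2)$ and whose probability is $\Omega(\epsilon^2)$, using $\sum_{j\neq i^*} p_j\geq \epsilon$. Split into two cases: (a) if $p_{K+1}\geq \epsilon/(2K)$, use the event ``$I_t^{(1)}=i^*$, $I_t^{(2)}=K+1$,'' which forces $\sigma_t>0$ and yields a directly computable difference $|\Delta_i-\Delta_{i^*}|\gtrsim \eta/G$; (b) otherwise $\sum_{j\in[K]\setminus\{i^*\}} p_j\geq \epsilon/2$, so pigeonhole gives a good $i_0\neq i^*$ with $p_{i_0}\geq \epsilon/(2K)$, and one constructs a similar event containing one $i^*$, one $i_0$, and at least one bad sample. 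The hard part will be case (b): because non-skipped updates require at least one bad sample, the event's probability inherits a factor of $p_{K+1}$, and ensuring the final $C_1$ does not vanish with $p_{K+1}$ is where the most care is needed — one must exploit that while such batches are rare in the small-$p_{K+1}$ regime, their $\tilde r$ magnitudes are correspondingly large, a trade-off that I expect to offset in the second-moment calculation.
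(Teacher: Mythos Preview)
Your handling of Clauses 1 and 2 is correct and essentially matches the paper. Your bounded-growth constant $C_2=\eta$ via the identity $\sum_g|\tilde r_t^{(g)}|=2\sqrt{n(G-n)}\le G$ is in fact sharper than the paper's $C_2=\eta\sqrt{G}$, which only uses $|\tilde r_t^{(g)}|\le\sqrt{G}$.

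The genuine gap is in the second-moment lower bound. Your case~(b) resolution relies on the claim that ``$\tilde r$ magnitudes are correspondingly large'' when $p_{K+1}$ is small, but this is false for the quantity that matters. For $i,i^*\in[K]$ one has $\Delta_i-\Delta_{i^*}=\tfrac{\eta}{G}\,\tilde r_{\mathrm{good}}(N_i-N_{i^*})$, and on the typical non-skipped batch in the small-$p_{K+1}$ regime ($n=G-1$) we get $\tilde r_{\mathrm{good}}=\sqrt{1/(G-1)}$, which is bounded and certainly does not blow up. Any event you construct in case~(b) must contain at least one bad sample and therefore has unconditional probability $O(p_{K+1})$; since the integrand stays $O(\eta^2)$, the unconditional second moment is $O(\eta^2 p_{K+1})$ and no uniform $C_1$ exists. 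The trade-off you hope for simply does not occur for good arms.

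The paper resolves this by reading the update index $t$ as ranging over \emph{non-skipped} steps, i.e.\ conditioning on $\mu_t\notin\{0,1\}$. This is without loss for the downstream argument (Lemma~\ref{lem:diffuse}) because skipped steps leave $\theta$ unchanged. Under this conditioning the case split disappears: one uses $\tilde r_{\mathrm{good}}^2=(G-n)/n\ge 1/(G-1)$ uniformly, fixes one bad sample (``WLOG $I_t^{(G)}=K+1$''), and then the remaining $G-1$ draws are i.i.d.\ from the original policy, giving
\[
\E\!\left[(N_i-N_{i^*})^2\,\middle|\,\mu_t\neq 0,1\right]\;\ge\;\frac{(G-1)(p_i+p_{i^*})\,p_{K+1}}{P(\mu_t\neq 0,1)}\;\ge\;\frac{(G-1)(p_i+p_{i^*})}{G}\;\ge\;\frac{G-1}{2KG},
\]
using $P(\mu_t\neq 0,1)\le 1-(1-p_{K+1})^G\le Gp_{K+1}$ and $p_{i^*}\ge 1/(2K)$. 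This yields a uniform $C_1$ of order $\eta^2/(KG^3)$ with no dependence on $p_{K+1}$. Your plan becomes correct once you make this conditioning explicit and drop the case analysis.
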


\begin{proof}
The GRPO algorithm is self-enforcing because
\begin{align*}
    \E[\theta_i^{(t + 1)} - \theta_i^{(t)}] = \eta \E[\tilde r_t^{(g)} \left( \1(I_t^{(g)} = i) - p_{i}^{(t)} \right)] = \eta \E[\tilde r_t^{(g)}  \1(I_t^{(g)} = i)] = \eta \E_{\mu_t} [ \E[\tilde r_t^{(g)}  \1(I_t^{(g)} = i) | \mu_t]].
\end{align*}
Noted that $\E[\tilde r_t^{(g)}  \1(I_t^{(g)} = i) | \mu_t]$ is monotonous with $p_i$, hence monotonous with $\theta_i$.

Further 
\begin{align*}
    |\theta_i^{(t + 1)} - \theta_i^{(t)} | &\le \eta \max_g |\tilde r_t^{(g)} \left( \1(I_t^{(g)} = i) - p_{i}^{(t)} \right)| \\
    &\le \eta \max_g |\tilde r_t^{(g)}| \le \eta \sqrt{G}.
\end{align*}

Now we only need to lower bound the second momentum of 
\[\Delta_{i, i^*, t} = \frac{\left(\theta_i^{(t+1)} - \theta_i^{(t)} \right) -  \left(\theta_{i^*}^{(t+1)} - \theta_{i^*}^{(t)} \right)}{\eta}\].

Noted that
\begin{align*}
    \theta_i^{(t+1)} - \theta_i^{(t)}  = \frac{\eta}{G} \sum_{g = 1}^G \tilde r_t^{(g)} \1(I_t^{(g)} = i).
\end{align*}

It holds that
\begin{align*}
    \sigma_t = \sqrt{\frac{1}{G} \sum_g (r_t^g - \mu)^2} = \sqrt{\frac{1}{G} \sum_g r_t^g - 2\mu r_t^g + \mu^2} = \sqrt{\mu - \mu^2}.
\end{align*}

Therefore when $r_t^{(g)} > 0$, 
\begin{align*}
   \tilde r_t^{(g)} = \frac{r_t^{(g)} - \mu_t}{\sigma_t} =    \frac{1 - \mu_t}{\sigma_t} = \sqrt{\frac{1 - \mu_t}{\mu_t}} \ge \sqrt{\frac{1}{G - 1}}.
\end{align*}

Because all $\tilde r_t^{(g)}$ are the same when $r_t^{(g)} > 0$, it holds that when $i \in [K]$,
\begin{align*}
    \Delta_{i, i^*, t}^2 &= \frac{1}{G} {\frac{1 - \mu_t}{\mu_t}} \left( \sum_{g = 1}^G \1(I_t^{(g)} = i) - \1(I_t^{(g)} = i^*) \right)^2 \\
    &\ge \frac{1}{G(G - 1)}\left( \sum_{g = 1}^G \1(I_t^{(g)} = i) - \1(I_t^{(g)} = i^*) \right)^2.
\end{align*}

This then implies
\begin{align*}
   \E[\Delta_{i, i^*, t}^2] &\ge \frac{1}{G(G-1)} \E\left[\left( \sum_{g = 1}^G \1(I_t^{(g)} = i) - \1(I_t^{(g)} = i^*) \right)^2 \Bigg | \mu_t \neq 1, 0 \right] 
\end{align*}

One can without loss of generality assume $I_t^{(G)} = K + 1$ and show that
\begin{align*}
   \E[\Delta_{i, i^*, t}^2] &\ge \frac{1}{G(G-1)} \E\left[\left( \sum_{g = 1}^{G - 1} \1(I_t^{(g)} = i) - \1(I_t^{(g)} = i^*) \right)^2  \right] \\
   &\ge \frac{1}{G} \E\left[ \left(\1(I_t^{(1)} = i) - \1(I_t^{(1)} = i^*) \right)^2  \right] = \frac{p_i + p_i^{*}}{G} \ge \frac{1}{2KG}.
\end{align*}

When $i \neq K$, noted that $\left(\theta_i^{(t+1)} - \theta_i^{(t)} \right) -  \left(\theta_{i^*}^{(t+1)} - \theta_{i^*}^{(t)} \right) > \left(\theta_i^{(t+1)} - \theta_i^{(t)} \right) > 0$. Therefore, a similar bound can show that $\E[\Delta_{i, i^*, t}^2] > \frac{1}{2KG}$. This then concludes the proof with $C_1 = \eta/2KG$
and $C_2 = \sqrt{G}$.

\end{proof}

\subsection{Diversity Never Improves with KL regularization}

\begin{theorem}[Diversity Preservation under KL Regularization]
\label{thm:diversity}
With $p_0$ as the initial policy and KL-regularization hyperparameter $\beta > 0$, if the REINFORCE process converges to policy $p^*$. Then, $p^*$ satisfies:
\[
\frac{p^*(i)}{\sum_{j=1}^K p^*(j)} = \frac{p_0(i)}{\sum_{j=1}^K p_0(j)} \quad \forall i \in \{1,\dots,K\}.
\]
Consequently, the distribution over the optimal arms under $p^*$ matches the initial distribution $p_0$ restricted to these arms and renormalized.
\end{theorem}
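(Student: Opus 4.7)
The plan is to analyze the fixed-point equations of the expected REINFORCE-with-KL update and use the fact that all good arms carry identical reward. Since the theorem supposes convergence of the stochastic process, I do not have to justify convergence itself; instead I only need to characterize the set of policies $p^*$ at which the expected parameter update vanishes and then exploit the symmetry among the good arms.

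First I would take expectations in the REINFORCE update and set $\E[\theta_i^{(t+1)} - \theta_i^{(t)}] = 0$ coordinate-wise at $p^*$. Using the softmax identity $\partial_{\theta_i} \log p_j = \delta_{ij} - p_i$, the policy-gradient contribution evaluates to $p_i^*(r_i - \bar r)$ where $\bar r = \sum_j p_j^* r_j$. A short calculation with $\partial_{\theta_i} p_j = p_j(\delta_{ij} - p_i)$ shows that the KL gradient with respect to the initial policy is
\begin{align*}
\nabla_{\theta_i} \KL(p^* \,\|\, p^{(0)}) = p_i^* \bigl[\log(p_i^*/p_i^{(0)}) - \KL(p^* \,\|\, p^{(0)})\bigr].
\end{align*}
Setting the total expected update to zero yields, for every $i$ with $p_i^* > 0$, the stationarity condition
\begin{align*}
r_i - \bar r \;=\; \beta\bigl[\log(p_i^*/p_i^{(0)}) - \KL(p^* \,\|\, p^{(0)})\bigr].
\end{align*}

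Next I would restrict this identity to $i \in \{1,\dots,K\}$, where by assumption $r_i = 1$ is independent of $i$. The right-hand side must therefore be the same constant $c := (1-\bar r)/\beta + \KL(p^* \,\|\, p^{(0)})$ for every good arm, so $\log(p_i^*/p_i^{(0)}) = c$ for all $i \in [K]$, i.e.\ $p_i^* = e^c \, p_i^{(0)}$ on the good arms. Dividing by $\sum_{j=1}^K p_j^* = e^c \sum_{j=1}^K p_j^{(0)}$ cancels the unknown constant $e^c$ and gives exactly the claimed proportionality $p_i^*/\sum_{j\le K} p_j^* = p_i^{(0)}/\sum_{j\le K} p_j^{(0)}$.

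The only mild subtlety is handling arms with $p_i^* = 0$: the stationarity equation above is derived after dividing by $p_i^*$, so a priori it holds only on the support of $p^*$. However, since $p^{(0)}$ has full support and the KL gradient blows up as $p_i^* \to 0$, any limit point of a bounded-parameter trajectory must keep $p_i^* > 0$ for all $i$; so in fact the relation holds for every $i \in [K]$. This support argument, together with confirming that the cancellation of $p_i^*$ in the fixed-point equation is valid, is the main technical point I would want to verify carefully; the rest is a direct algebraic consequence of the symmetry $r_1 = \cdots = r_K = 1$.
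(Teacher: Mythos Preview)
Your proposal is correct and follows essentially the same route as the paper: both arguments set the expected update (equivalently, the gradient of $\sum_i r_i p_i - \beta\,\KL(p\|p^{(0)})$) to zero, compute the policy-gradient and KL-gradient contributions under the softmax parameterization, divide through by $p_i^*$, and then use $r_1=\cdots=r_K$ to conclude that $\log(p_i^*/p_i^{(0)})$ is constant on the good arms. Your presentation is in fact slightly cleaner (you identify the $-\KL$ term in the gradient directly and you explicitly note the $p_i^*>0$ caveat, which the paper leaves implicit via the softmax parameterization), but the argument is the same.
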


\begin{proof}
    Using policy gradient theorem, we know that the converged policy $p^*$ and corresponding parameter $\theta^*$ satisfy that,
    \begin{align*}
        \nabla_{\theta}\left[ \sum_{i = 1}^{K + 1}r_i p_i + \beta \KL\left( p | p^0\right)\right] \Bigg |_{\theta = \theta^*} = 0
    \end{align*}

This then suggests that for any $k$
\begin{align*}
    r_k p_k^* - p_k^* \sum_{i = 1}^{K + 1} r_i^* p_i^* + \beta \sum_{i = 1}^{K + 1} \nabla_{\theta_k}[p_i \log p_i - p_i \log p^0_i]  = 0
\end{align*}

This is equivalent to
\begin{align*}
    r_k p_k^* - p_k^* \sum_{i = 1}^{K + 1} r_i^* p_i^* + \beta \sum_{i = 1}^{K + 1} (\1(i = k) - p_k^*)  p_i^*(\log p_i^* + 1 - \log p^0_i)  = 0
\end{align*}

Simplifying

\begin{align*}
    r_k  + \beta (\log p_k^* + 1 - \log p_0) = \sum_{i = 1}^{K + 1} r_i^* p_i^* + \beta \sum_{i = 1}^{K + 1} p_i^*(\log p_i^* + 1 - \log p^0_i)
\end{align*}

For all $k \in [K]$, we know that $r_k$ is equivalent, therefore, $\frac{p_k^*(i)}{p_0^*{(i)}}$ is a constant for $k \in [K]$, concluding our proof.
\end{proof}

\subsection{Technical Lemma}
\begin{lemma}
\label{lem:exp-bound}
For $x \in \R, |x| < C$, it holds that
\begin{align*}
    \exp(x) > 1 + x + A_C x^2 
\end{align*}
here $A_C = \frac{\exp(-C) + C - 1}{C^2}$.
\end{lemma}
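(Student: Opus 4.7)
The plan is to recast the inequality as a monotonicity statement about an auxiliary function. Define
\[ g(x) = \frac{\exp(x) - 1 - x}{x^2}, \qquad x \ne 0, \]
extended by continuity to $g(0) = 1/2$. A direct substitution gives $g(-C) = (\exp(-C) + C - 1)/C^2 = A_C$, so the desired bound $\exp(x) \ge 1 + x + A_C x^2$ on $|x| \le C$ is equivalent to $g(x) \ge g(-C)$ on that interval. It therefore suffices to prove that $g$ is monotonically increasing on all of $\R$.

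The cleanest route is via the integral representation
\[ g(x) = \int_0^1 (1-t) \exp(tx)\, dt, \]
which follows from a single integration by parts with $u = 1-t$ and $dv = \exp(tx)\, dt$ (equivalently, from the integral form of Taylor's remainder for $\exp$ around $0$). Since the integrand is smooth in $x$ and uniformly bounded in $t \in [0,1]$ on any compact $x$-interval, differentiation under the integral sign is justified and yields
\[ g'(x) = \int_0^1 t(1-t)\exp(tx)\, dt. \]
The integrand is strictly positive on $(0,1)$, so $g'(x) > 0$ for every $x \in \R$, establishing strict monotonicity.

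Putting the pieces together: for $|x| \le C$, monotonicity of $g$ gives $g(x) \ge g(-C) = A_C$, which rearranges to the desired inequality $\exp(x) \ge 1 + x + A_C x^2$. Equality is achieved at the boundary point $x = -C$ (by construction) and at $x = 0$ (where both sides vanish), and is strict elsewhere in $|x| < C$; the statement as written should accordingly be read in the non-strict sense at these two exceptional points.

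There is no substantive obstacle to this argument. The only minor verifications are the integration-by-parts identity for $g$ and the use of Leibniz's rule to differentiate under the integral, both entirely routine. An alternative proof works directly from the power series $g(x) = \sum_{n \ge 0} x^n / (n+2)!$, but arguing monotonicity term-by-term is slightly awkward for $x < 0$, whereas the integral representation handles both signs uniformly.
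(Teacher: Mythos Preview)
Your proof is correct and takes the same approach as the paper: reduce to the monotonicity of $g(x) = (\exp(x) - 1 - x)/x^2$ and note $g(-C) = A_C$. The paper's own proof is a one-line sketch asserting only that $g$ is increasing for $x < 0$; your integral representation $g(x) = \int_0^1 (1-t)e^{tx}\,dt$ fills in that claim and handles both signs at once, and your remark that the strict inequality must be read non-strictly at $x=0$ and $x=-C$ is accurate.
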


\begin{proof}

Define $g(x) = \frac{\exp(x) - 1 - x}{x^2}$, this function monotonically increase when $x < 0$.
\end{proof}

\newpage
\section{Open-Thoughts Evaluation}
\label{app:openthoughts}
We finetune Qwen2.5-7B-Instruct over OpenThoughts-114k for 5 epochs using BF16 and AdamW and hyperparameters \texttt{lr=1e-5}, \texttt{bs=128}, \texttt{warmup=150 steps}. We sample 40 reasoning traces with temperature set to $0.7$ for each of the 30 problems in AIME24. Then we evaluate the following quantities.

\begin{figure}[h]
    \centering 
    \includegraphics[width=0.8\linewidth]{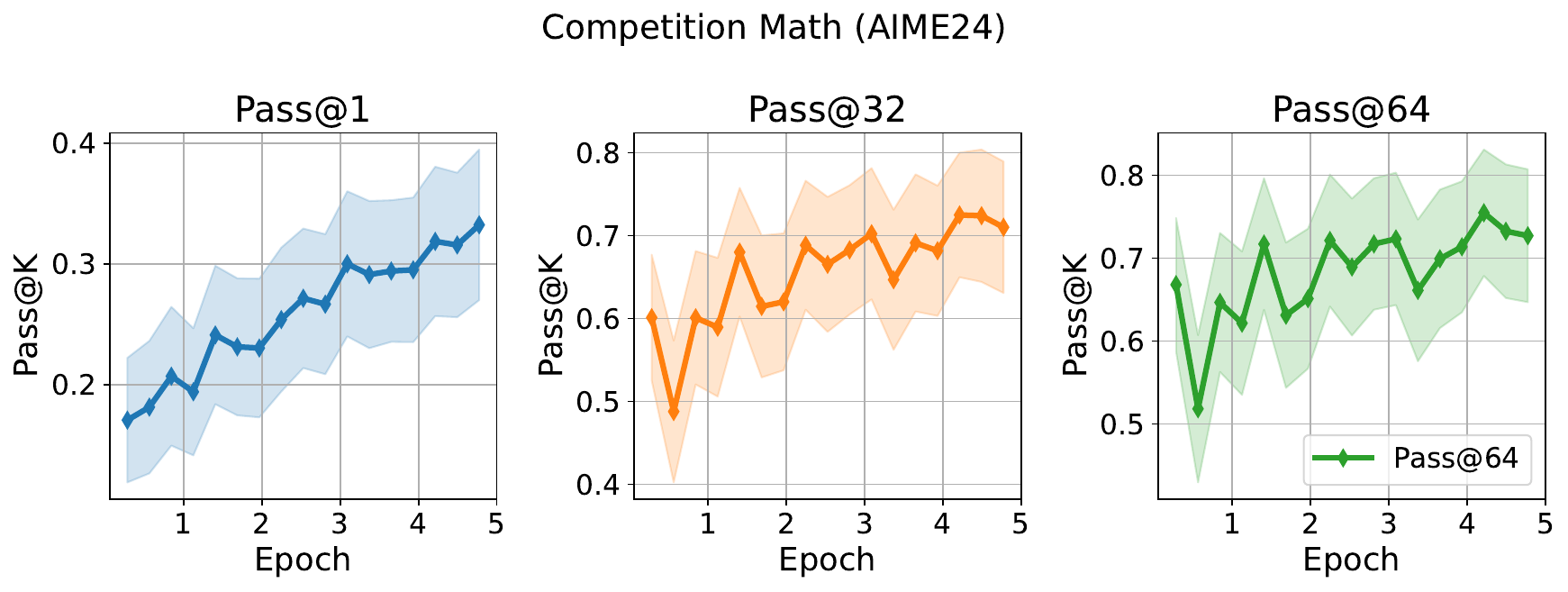}
    \caption{Pass@K Evaluated on AIME24 over OpenThoughts-114K SFT checkpoints. We plot the expected \pass{K} $\pm$ SD. Note that improvements in Pass@K slows down while \pass{1} improves at a constant rate. Furthermore, the confidence interval of \pass{1} widens, meaning the variance increases during SFT.}
    \label{fig:aime-over-k}
\end{figure}

\begin{figure}[h]
\includegraphics[width=\linewidth]{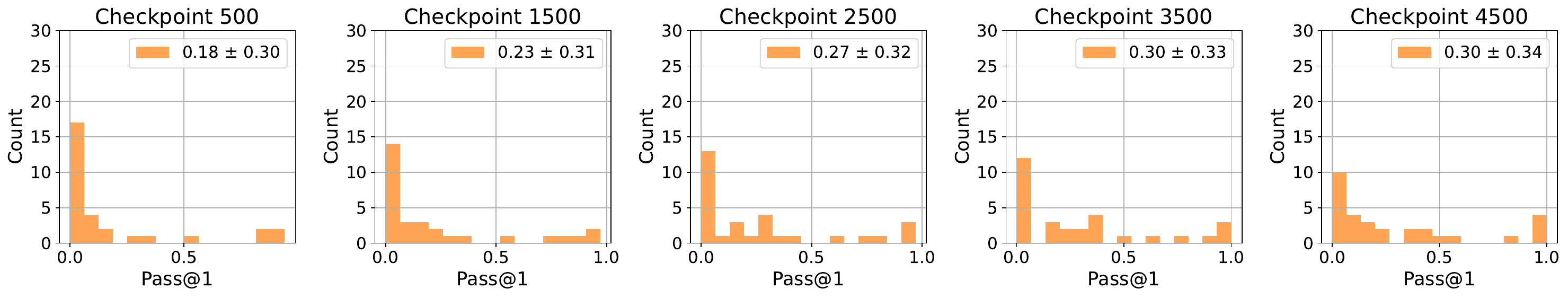}
\caption{Histogram of Pass@1 over AIME24. Variance of Pass@1 increases over finetuning on OpenThoughts-114K. We note that since AIME24 only has 30 questions, the density plot may not be completely reliable.}
\label{fig:aime-histogram}
\end{figure}

\begin{figure}[h]
\centering 
\includegraphics[width=0.6\linewidth]{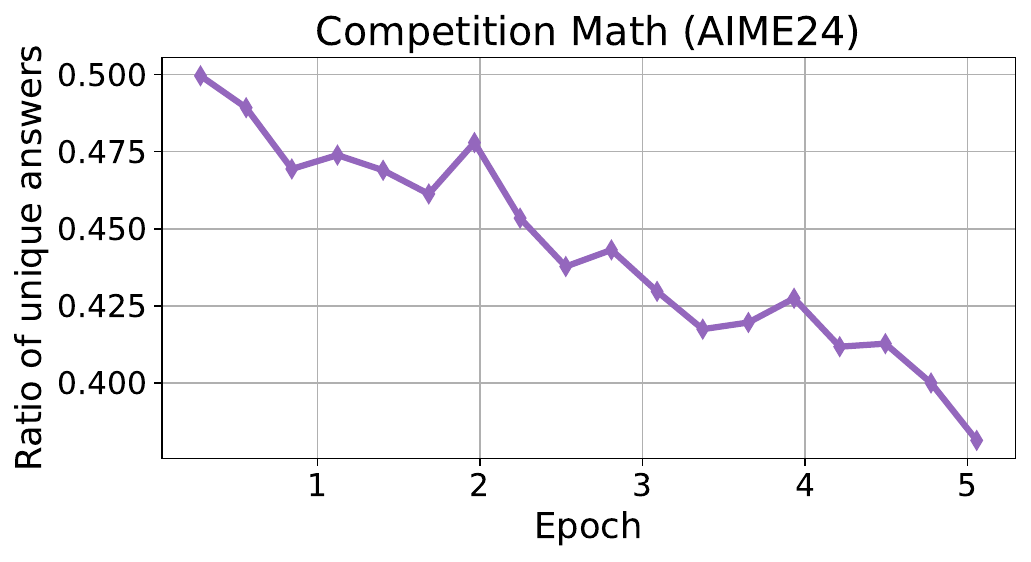}
\caption{We plot the average number of unique answers sampled over the total number samples i.e. $\abs{\{y_i\}_{i=1}^n}/{n}$. Model samples less diverse number of answers as SFT progresses.}
\label{fig:aime-diversity}
\end{figure}

\FloatBarrier
\newpage
\section{Interpolation Coefficients}
\label{app:interpolation}

\begin{figure}[h]
    \centering 
    \includegraphics[width=\linewidth]{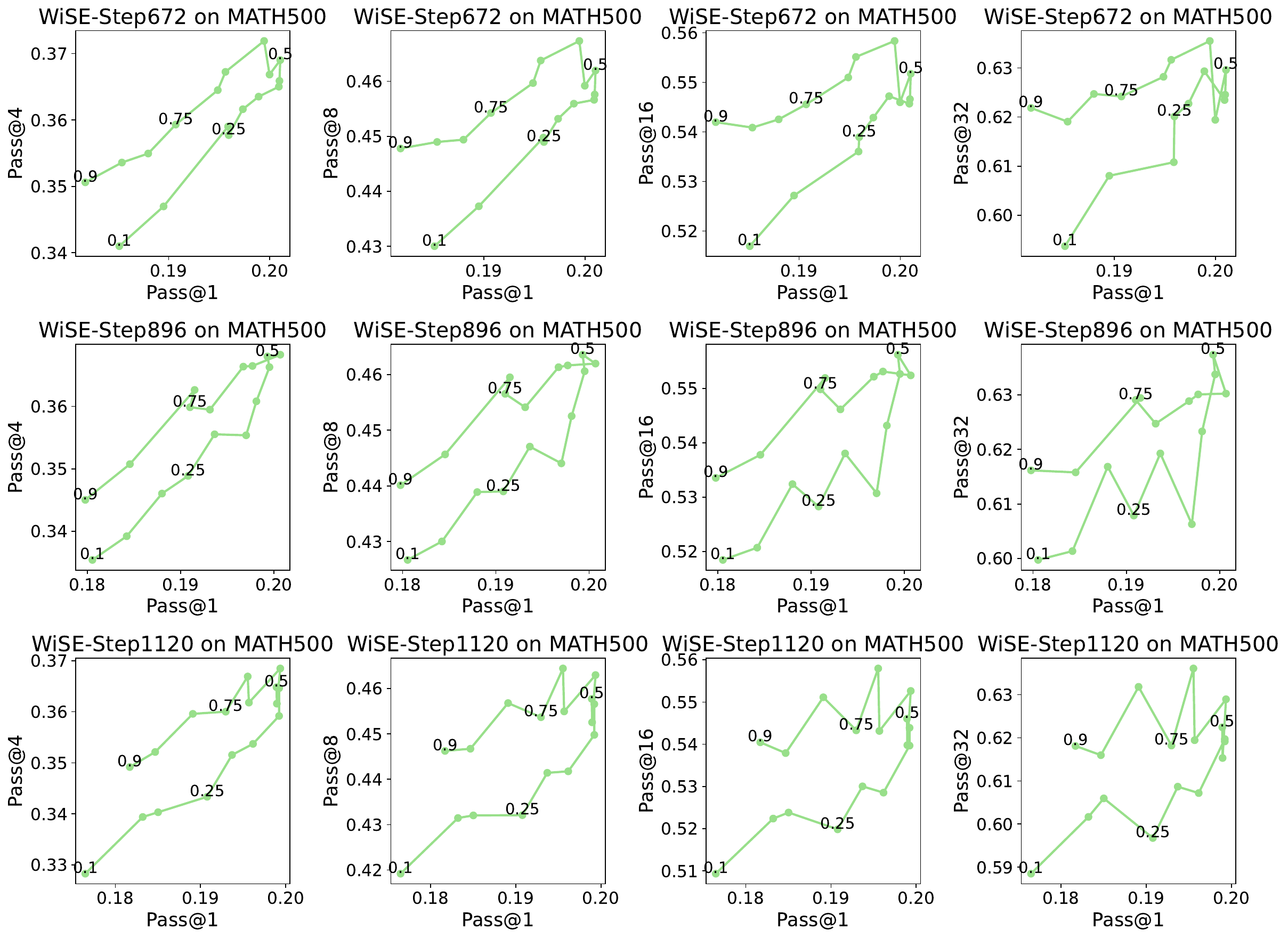}
    \caption{Pass@1 versus Pass@K of WiSEFT of \qwen trained and evaluated on MATH500. We interpolate between model $\mb w_0$ at Step 112 with $\mb w_t$ for $t \in [672, 896, 1120]$ as $\delta \mb w_0 + \mb (1 - \delta) \mb w_t$ where $\delta \in [0.1, 0.9]$. }
    \label{fig:math_interpolation}
\end{figure}

\begin{figure}[ht]
    \centering
    \includegraphics[width=\linewidth]{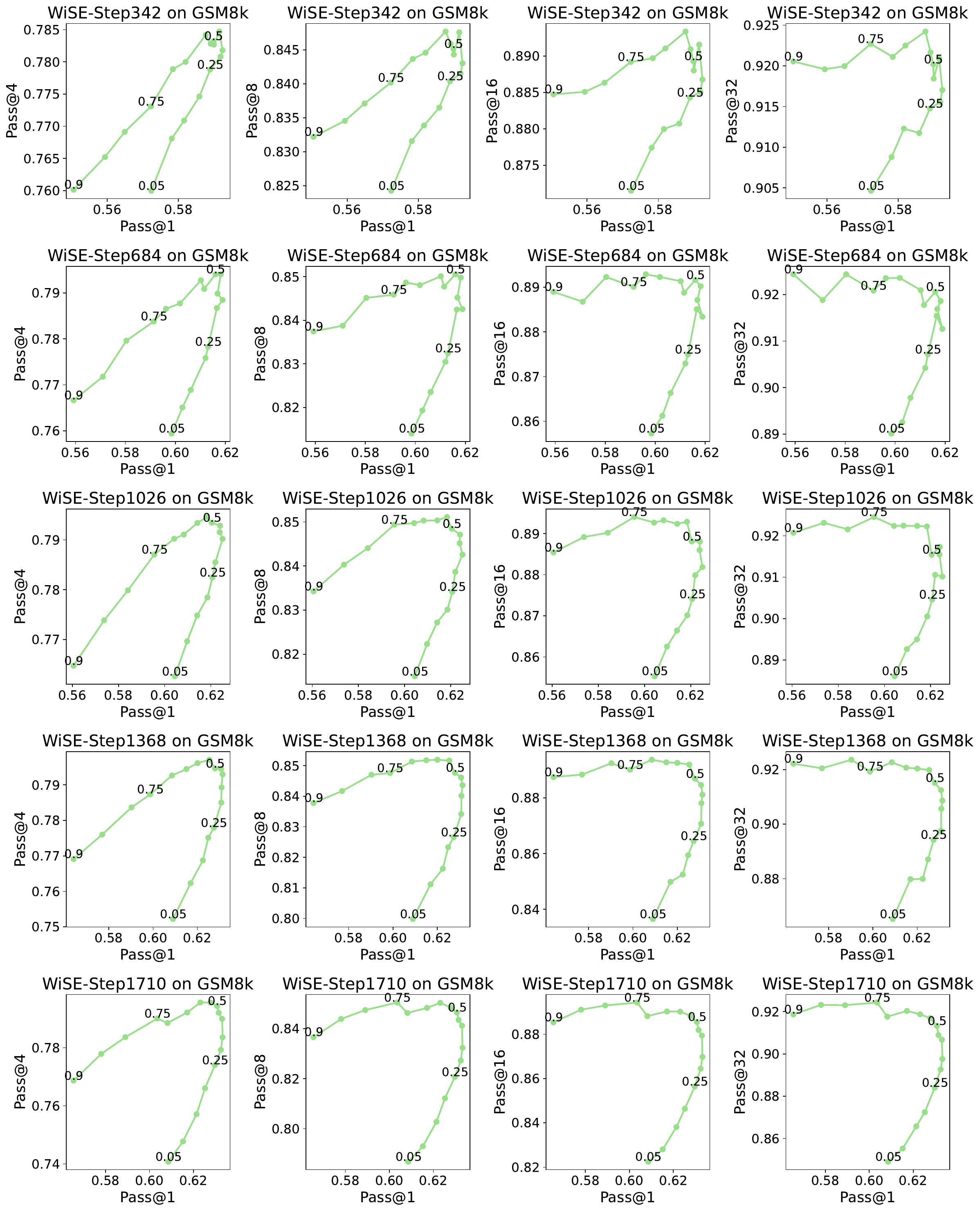}
    \caption{Pass@1 versus Pass@K of WiSEFT of \gemma trained and evaluated on GSM8K. We interpolate between model $\mb w_0$ at Step 171 with $\mb w_t$ for $t \in [342, 684, 1026, 1368, 1710]$ as $\delta \mb w_0 + \mb (1 - \delta) \mb w_t$ where $\delta \in [0.05, 0.9]$. }
    \label{fig:gsm_interpolation}
\end{figure}

\FloatBarrier
\clearpage

\newpage
\section{Measuring Diversity of Traces}
We measure the \emph{diversity} of the 100 sampled traces of \gemma across GSM8k test. We measure diversity in terms of 3 different measures. 

\begin{description}
\item[Output Diversity] The cardinality or number of unique answers in the set of all model outputs $\abs{\{\hat{y}_1, \hat{y}_2, \hdots, \hat{y}_n\}}$ over the total number of traces. 
\item[Operation Diversity] In GSM8k, each intermediate step consists of basic arithmetic operations, e.g. 5 + 3 = 8. We may simply map each of the traces to the sequence of arithmetic operations the model steps through, i.e. $r_i \rightarrow [o_1, o_2, \hdots, o_t]$. This mapping is extracted by code. Then, given this set, we measure unique sequence of operations over the number of total traces. 
\item[Semantic Diversity] We measure the similarity of trace using cosine similarities between the text-embeddings \citep{bilmes2022submodularity, yu2023metamath}.
\end{description}

\subsection{Does temperature increase diversity?}
Temperature does increase diversity, but it also increases the chances of sampling outlier answers.

\begin{figure}[!hb]
    \includegraphics[width=\textwidth]{Figures/gemma/all_diversity_types_across_checkpoints_temp0.8.pdf}
    \includegraphics[width=\textwidth]{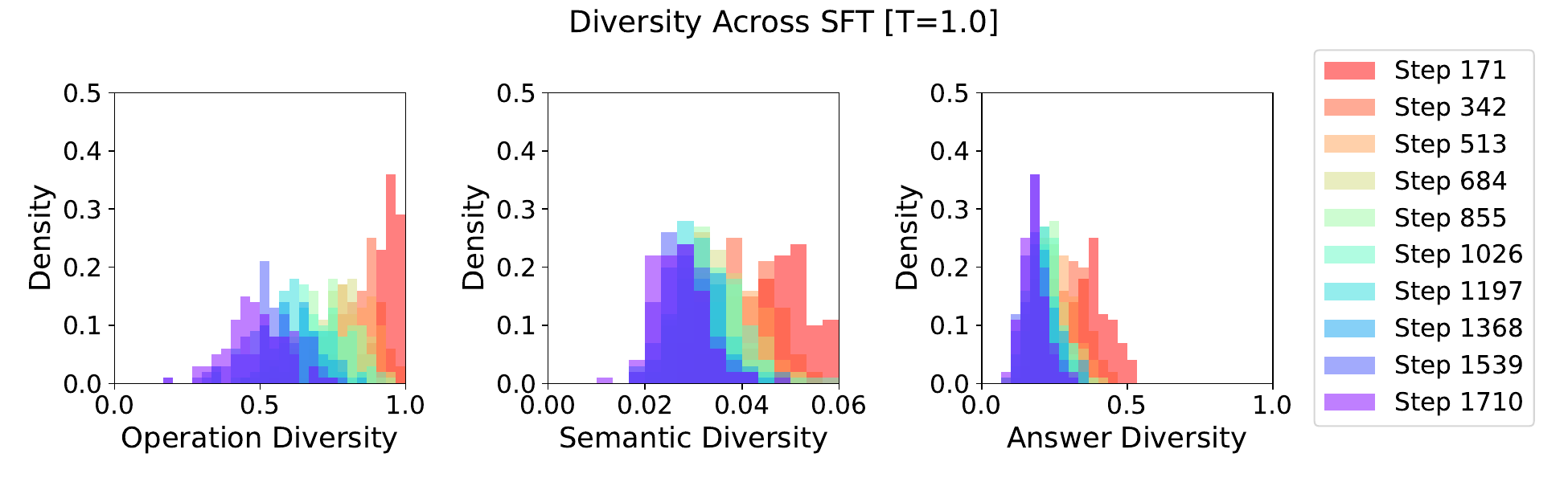}
    \includegraphics[width=\textwidth]{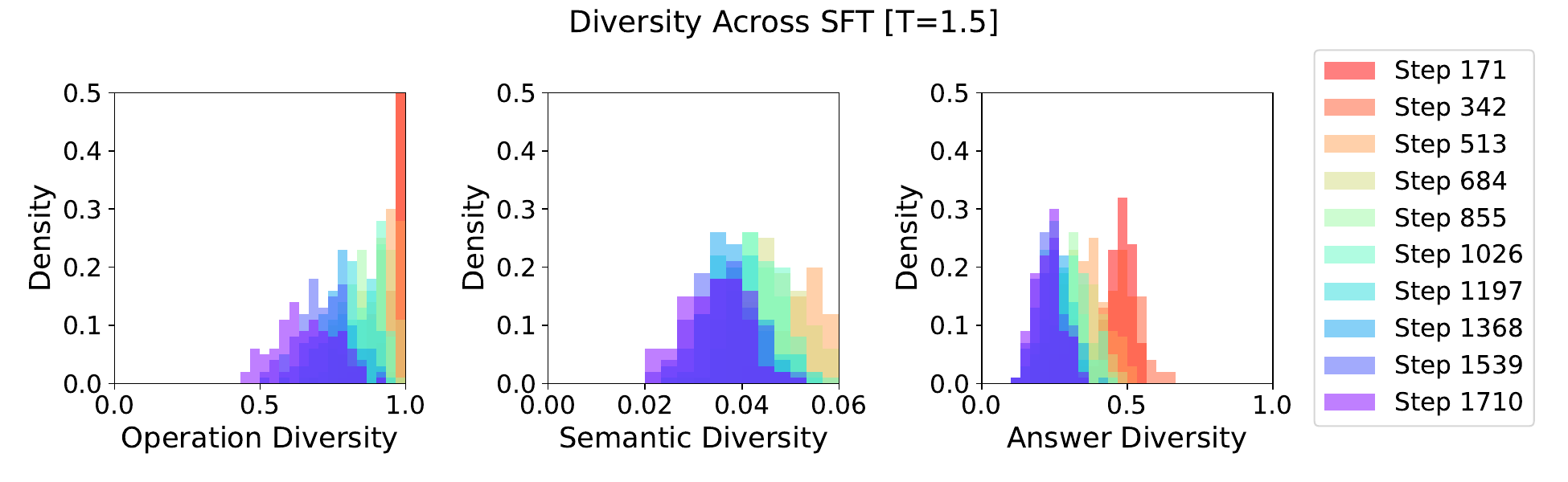}
    \caption{Diversity of traces sampled with Temperature $\in \{0.8, 1.0, 1.5\}$ for \gemma SFT checkpoints on GSM8k}
\label{app:fig:diversity_across_training}
\end{figure}
\FloatBarrier

\newpage
\subsection{How well do token-level diverse decoding strategies compare with optimal strategy with oracle?}
\paragraph{Hyperparameter Tuning Details} We grid search for optimal temperature for all baselines over $T = [0.8, 1.0, 1.2, 1.5, 1.8]$. For nucleus, we choose the best cutoff threshold between $[0.8, 0.9, 0.95]$. For min-p, we choose the best probability threshold between  $[0.01, 0.05, 0.1]$. For tokenwise top-k, we choose best k between $[12, 25, 50]$.
\begin{table}[h!]
    \centering
    \begin{tabular}{cccc}
        \toprule
        \textbf{Decoding Strategy} & \textbf{Pass@2} & \textbf{Pass@4} & \textbf{Pass@8} \\
        \midrule
        Naive & 0.565 & 0.666 & 0.760 \\
        Nucleus & 0.566 & 0.668 & 0.757 \\
        Min-p   & 0.566 & 0.668 & 0.760 \\
        Top-k   & 0.563 & 0.666 & 0.756 \\
        Top-k w/Oracle & 0.760 & 0.832 & 0.901 \\
        \bottomrule
    \end{tabular}
    \caption{Best Pass@k of Sampling Strategies for \qwen over SFT checkpoints}
    \label{tab:pass_at_k_across_checkpoints}
\end{table}

\begin{table}[h!]
    \centering
    \begin{tabular}{cccc}
        \toprule
        \textbf{Decoding Strategy} & \textbf{Pass@2} & \textbf{Pass@4} & \textbf{Pass@8} \\
        \midrule
        Naive & 0.547 & 0.648 & 0.737 \\
        Nucleus & 0.528 & 0.617 & 0.694 \\
        Min-p  & 0.550 & 0.655 & 0.744 \\
        Top-k  & 0.538 & 0.646 & 0.738 \\
        Top-k w/Oracle & 0.730 & 0.814 & 0.878 \\
        \bottomrule
    \end{tabular}
    \caption{Pass@k of Sampling Strategies for \qwen at Last SFT Checkpoint}
    \label{tab:pass_at_k_last_checkpoint}
\end{table}

\begin{figure}[!ht]
    \centering \includegraphics[width=\textwidth]{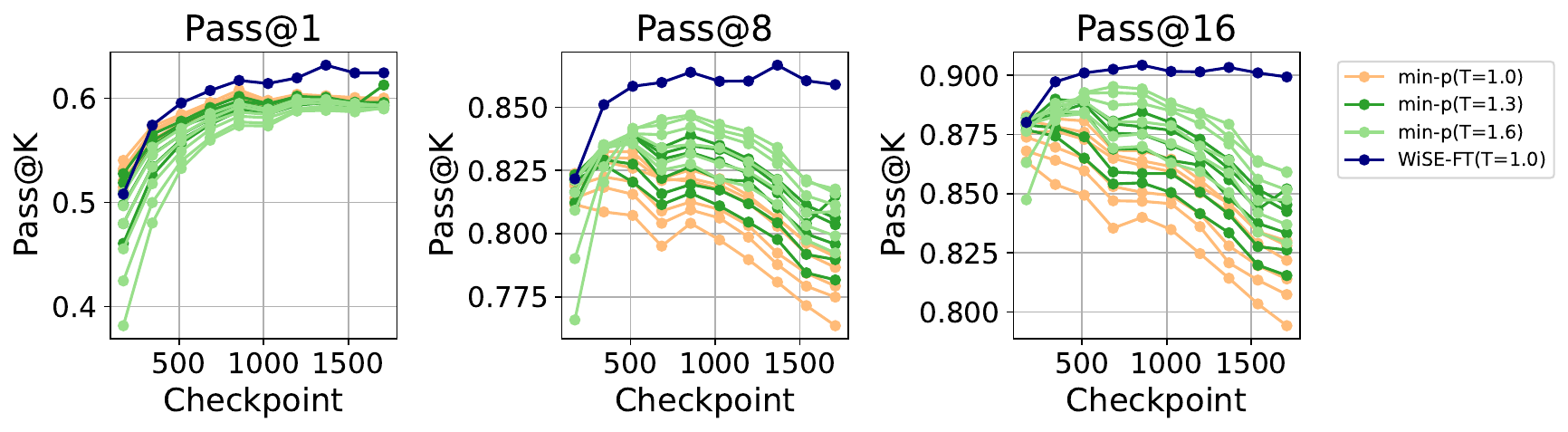}
    \caption{Pass@K over different Min-P thresholds $\gamma \in [0, 0.3]$ and temperatures $T \in [1, 1.6]$ for \gemma finetuned on GSM8K. Generally, no min-p threshold paired with high temperature T = 1.6 (in light green) is able to surpass the \pass{1} of T = 1 with best min-p threshold (in orange). In other words, unlike WiSE-FT which increases both \pass{1} and \pass{K}, \pass{1} tends to still decrease for the diverse decoding strategy of applying min-p with high temperature. }
    \label{fig:minp}
\end{figure}
\begin{figure}[ht!]
    \centering
    \includegraphics[width=\textwidth]{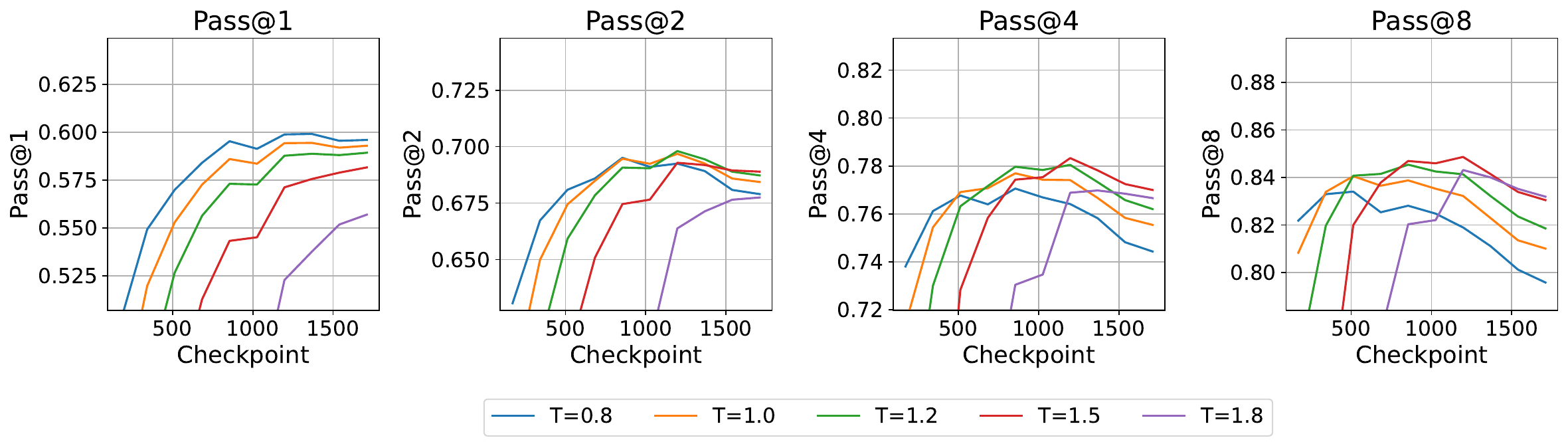}
    \caption{Pass@k of \gemma GSM8k Naive Sampling with Replacement}
    \label{fig:naive_gemma}
\end{figure}
\begin{figure}[ht!]
    \centering 
    \includegraphics[width=\textwidth]{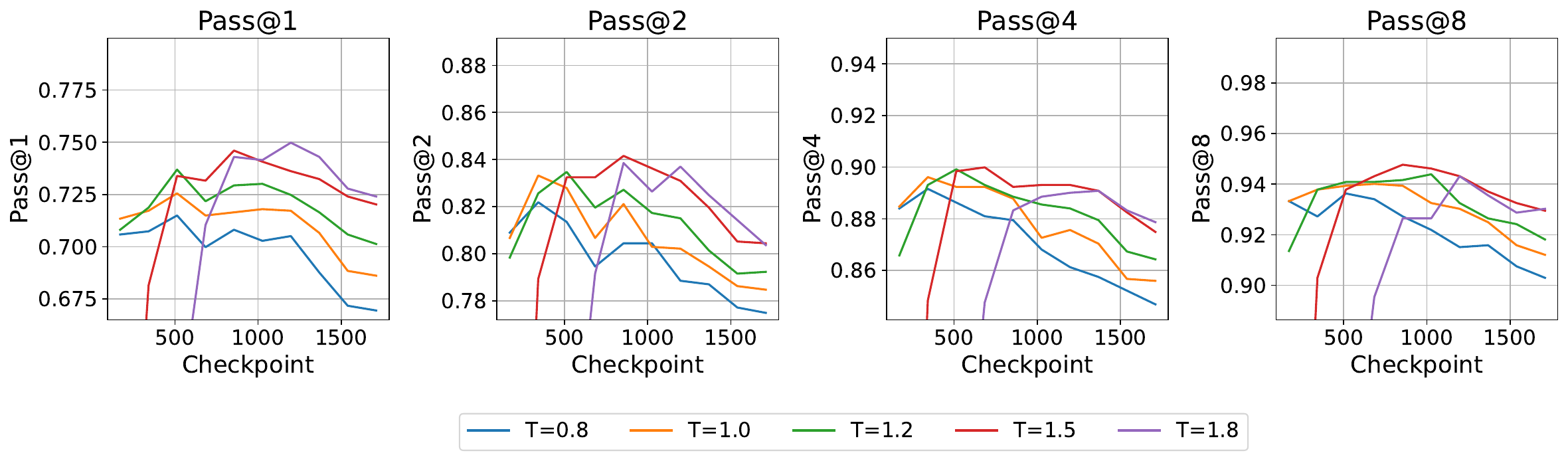}
    \caption{Pass@k of \gemma GSM8k Oracle Top K Sampling}
    \label{fig:top_k_gemma}
\end{figure}
\FloatBarrier
\newpage
\begin{figure}[ht!]
    \centering
    \includegraphics[width=\textwidth]{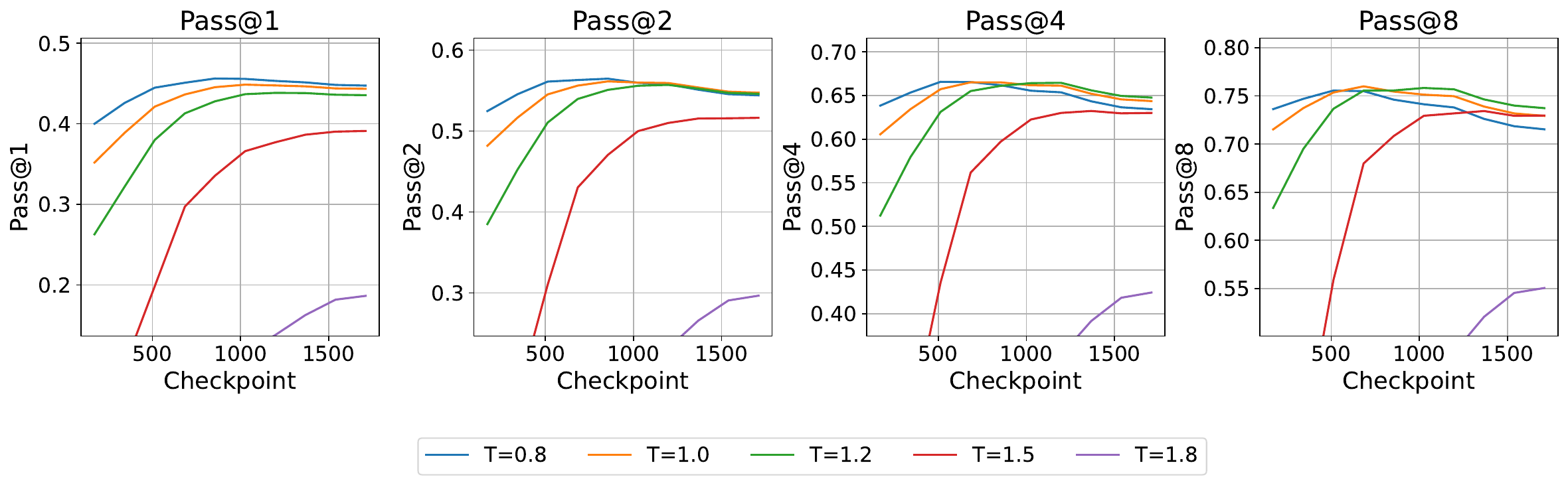}
    \caption{Pass@k of \qwen GSM8k Naive Sampling with Replacement}
    \label{fig:naive_qwen}
\end{figure}
\begin{figure}[ht!]
    \centering 
    \includegraphics[width=\textwidth]{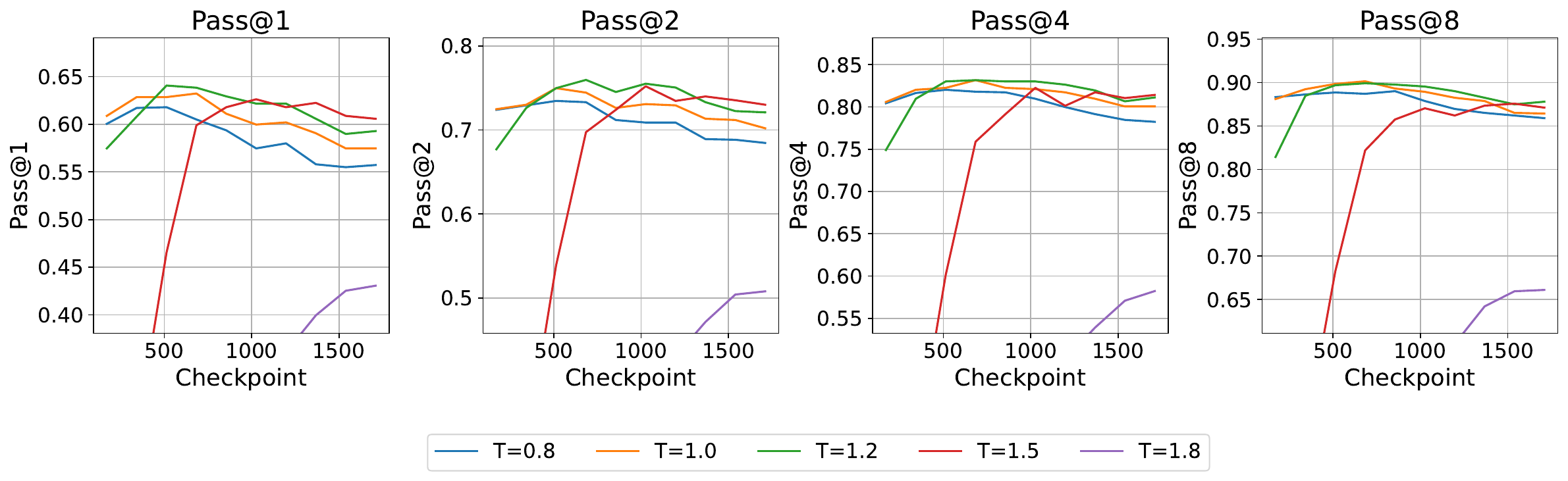}
    \caption{Pass@k of \qwen GSM8k Oracle Top K Sampling}
    \label{fig:top_k_qwen}
\end{figure}

\subsection{Diversity Comparison Between SFT and WiSE-FT}
\begin{figure}[h!]
    \centering
    \includegraphics[width=0.85\textwidth]{Figures/gemma/all_diversity_types_across_checkpoints_temp1.0.pdf}
    \includegraphics[width=0.85\textwidth]{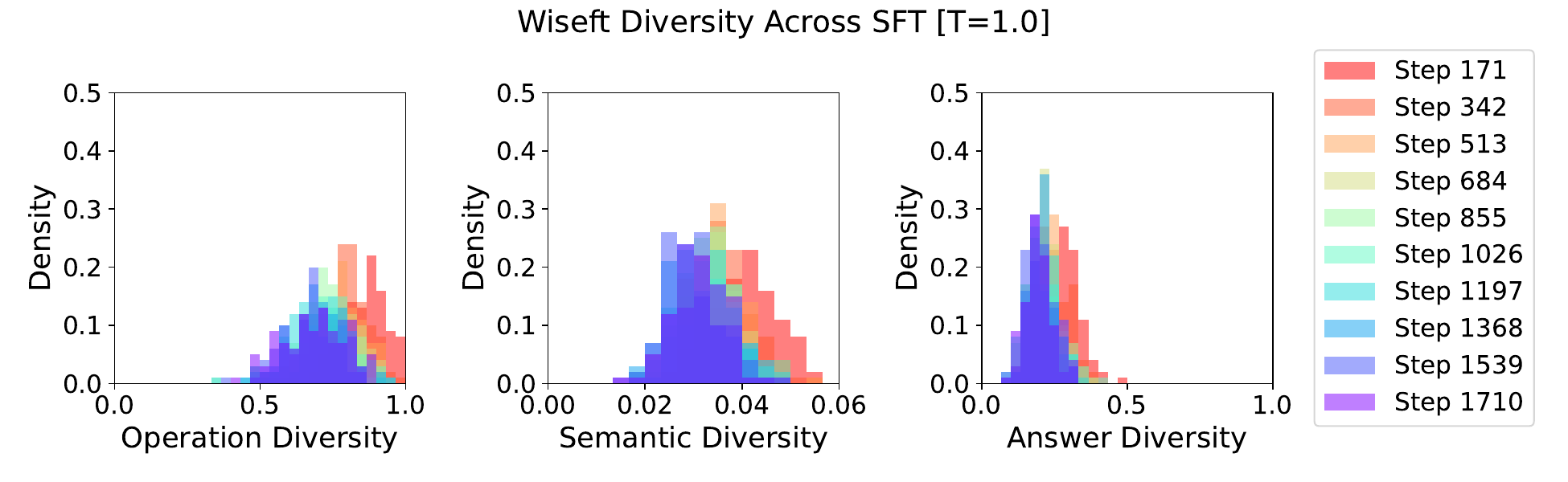}
    \caption{Operation, Semantic, and Answer Diversity of \gemma checkpoints of SFT over GSM8K versus the corresponding WiSE-FT variants (with the earliest checkpoint). We decode with temperature set to 1.0.}
\end{figure}

\begin{figure}[h!] 
\centering
\includegraphics[width=0.85\textwidth]{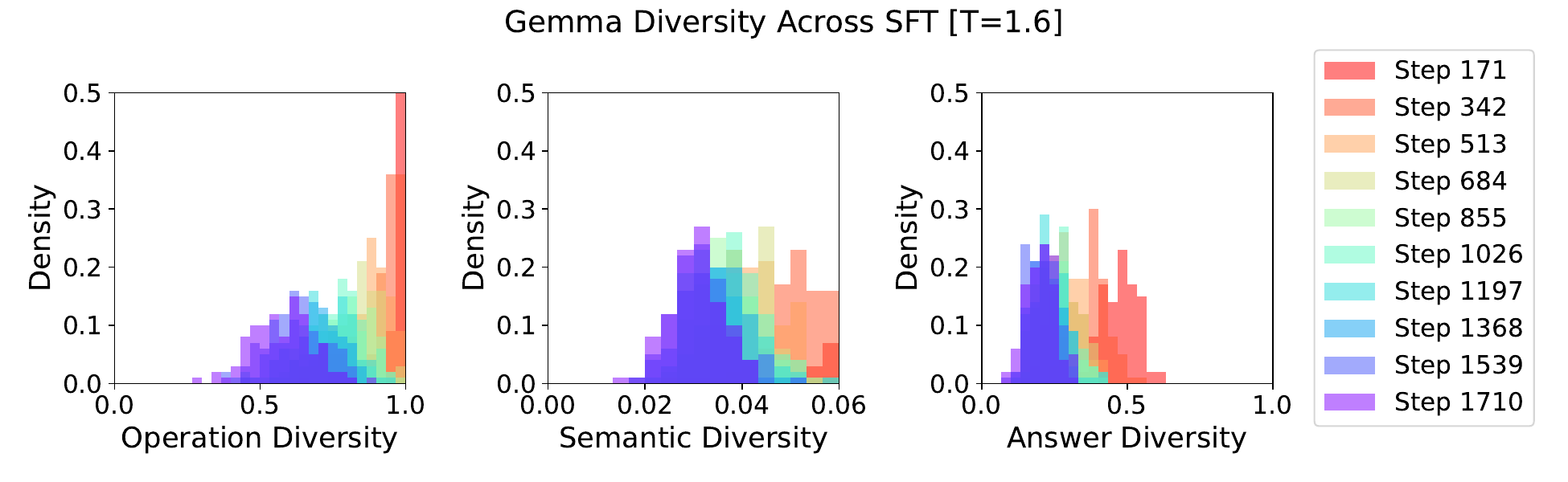}
    \includegraphics[width=0.85\textwidth]{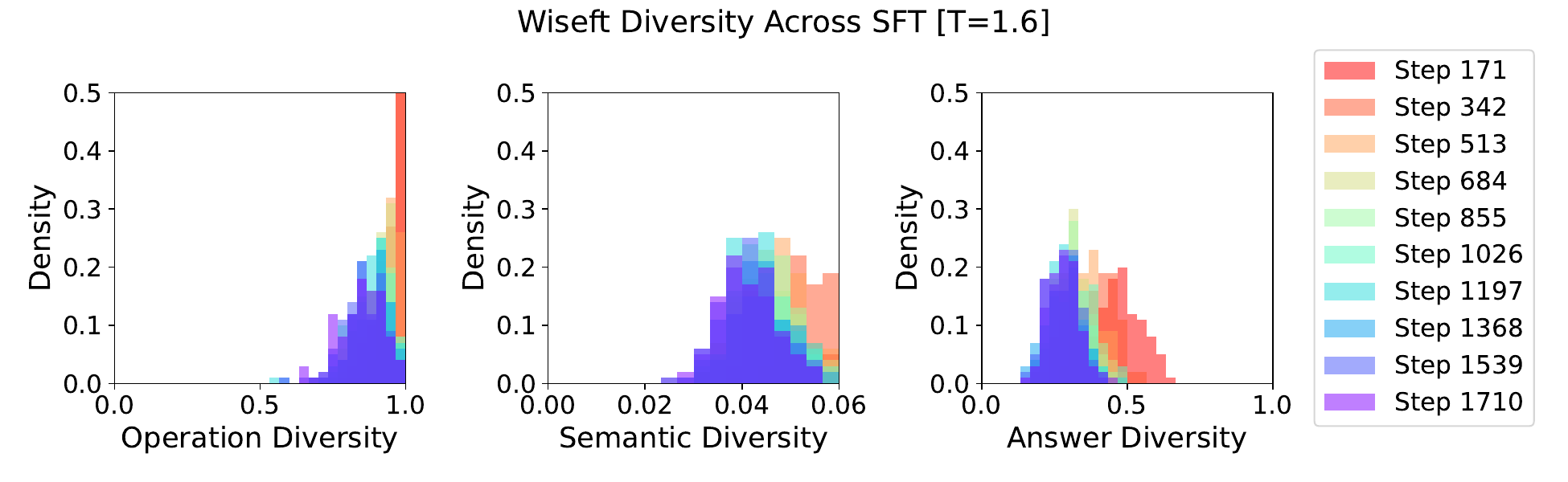}
    \caption{Operation, Semantic, and Answer Diversity of \gemma checkpoints of SFT over GSM8K versus the corresponding WiSE-FT variants (with the earliest checkpoint). We decode with temperature set to 1.6.}
\end{figure}

\FloatBarrier
\clearpage

\section{Best of K Evaluation}
\label{app:sec:oracle}
\begin{figure}[!ht]
    \centering
    \includegraphics[width=1\textwidth]{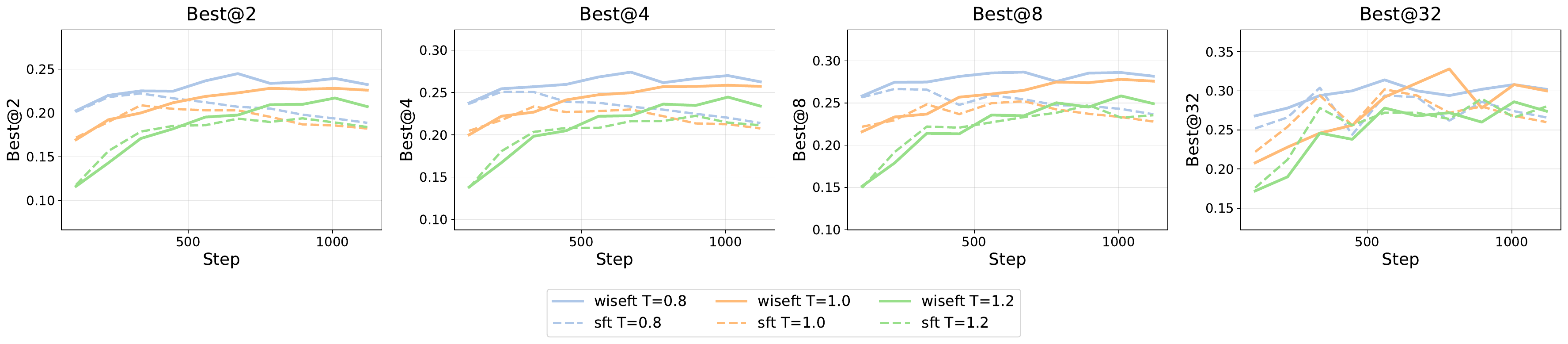}

    \caption{Best@K performance on MATH500 with ORM verifier, comparing different SFT and WiSE-FT checkpoints of \qwen for \( K = 2, 4, 8, 32 \)}  
\end{figure}

\begin{figure}[!ht]
    \centering
    \includegraphics[width=1\textwidth]{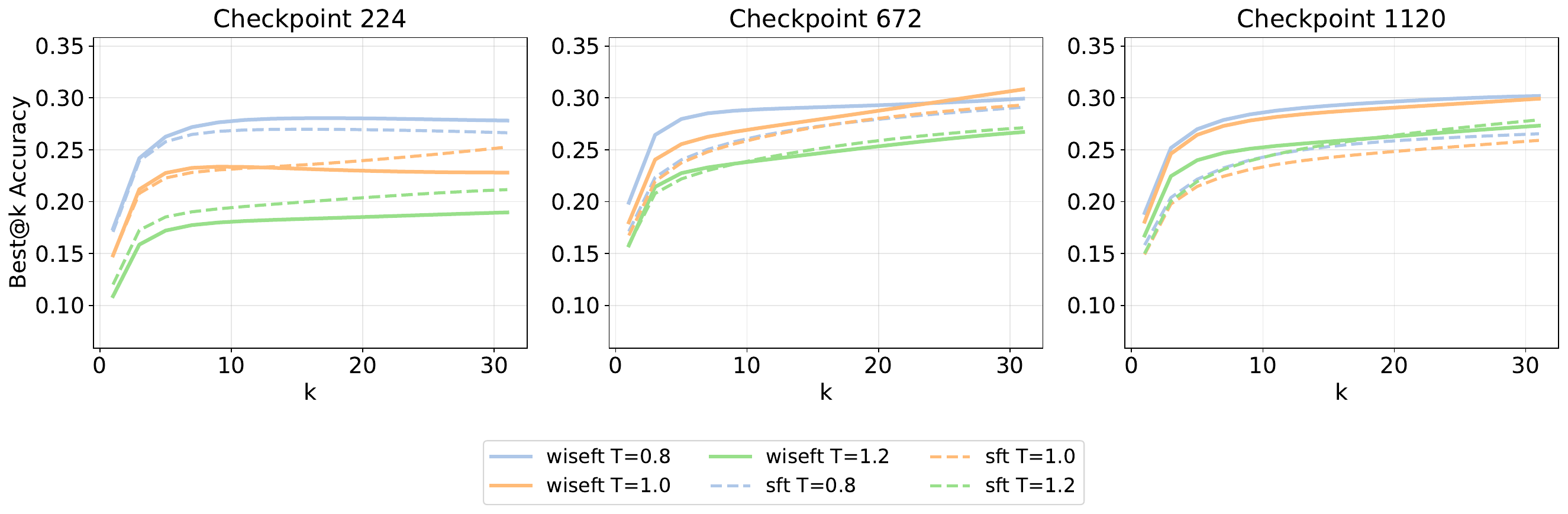}

    \includegraphics[width=1\textwidth]{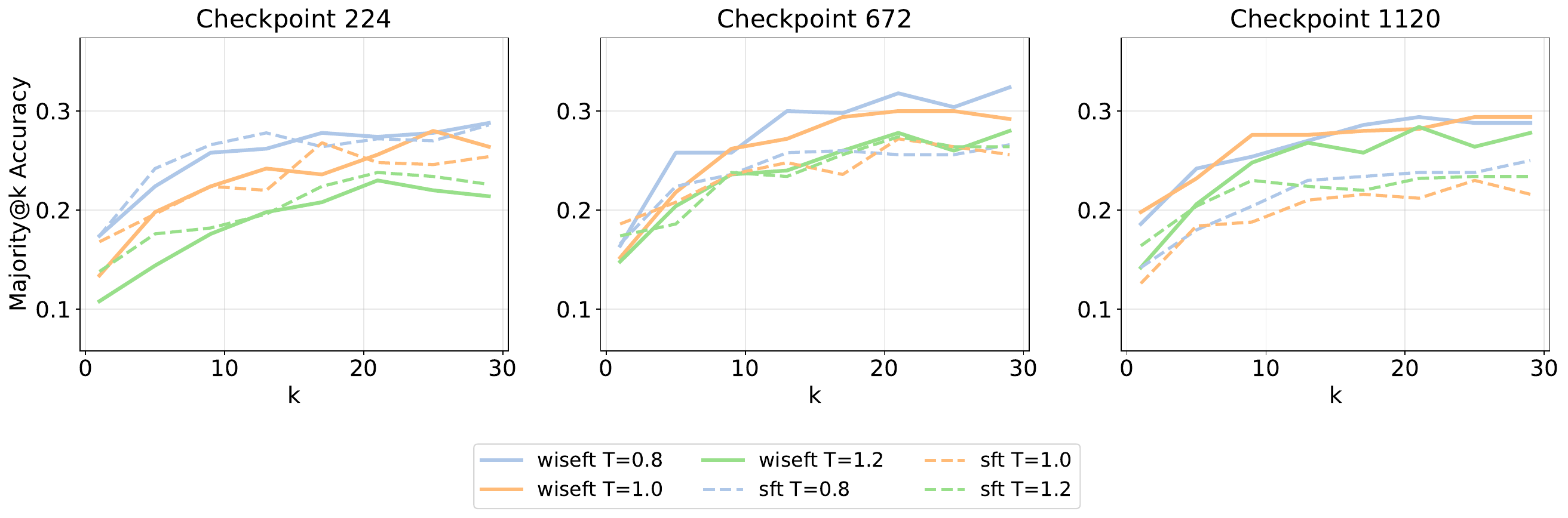}
    
    \caption{Best@K performance on MATH500 with ORM (Top) and Majority Vote (Bottom) for early, middle, and late SFT checkpoints and  WiSE-FT counterparts, showing \qwen's scaling across K values.}
\end{figure}

\begin{figure}[!ht]
    \centering
    \includegraphics[width=1\textwidth]{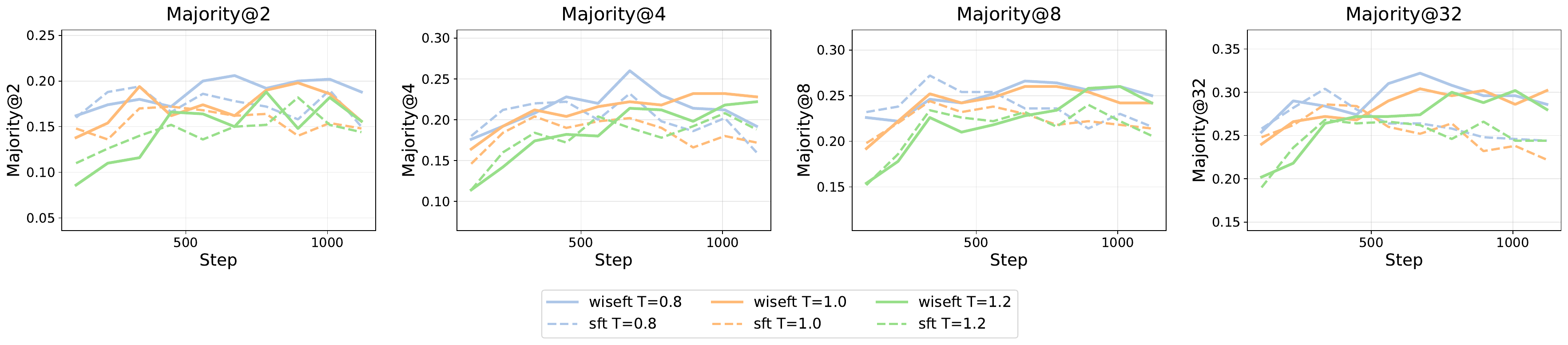}
    \caption{Best@K performance on MATH500 with majority voting, comparing different SFT and WiSE-FT checkpoints of \qwen for \( K = 2, 4, 8, 32 \)}  
\end{figure}

\section{Diversity Collapse and WiSE-FT Results for the Coding Task}

To test whether coding tasks exhibit the same diversity collapse observed in reasoning benchmarks, we fine-tuned the Qwen2.5-coder-0.5B model for 10 epochs on the Magicoder-Evol-Instruct-110K dataset, following the Stage 2 SFT recipe from OpenCoder LLM. We then applied WiSE-FT by interpolating the weights of the second SFT checkpoint with the initial model using interpolation ratio 0.5. Both the original SFT checkpoints and their WiSE-FT counterparts were evaluated on HumanEval for pass@k.

We found that, much like in mathematical reasoning tasks, SFT on coding data indeed suffers from diversity collapse: although pass@1 steadily improves over epochs, pass@k begins to deteriorate. And WiSE-FT still improves performance and mitigates the diversity collapse.

\begin{figure}[h!] \centering \includegraphics[width=1.0\linewidth]{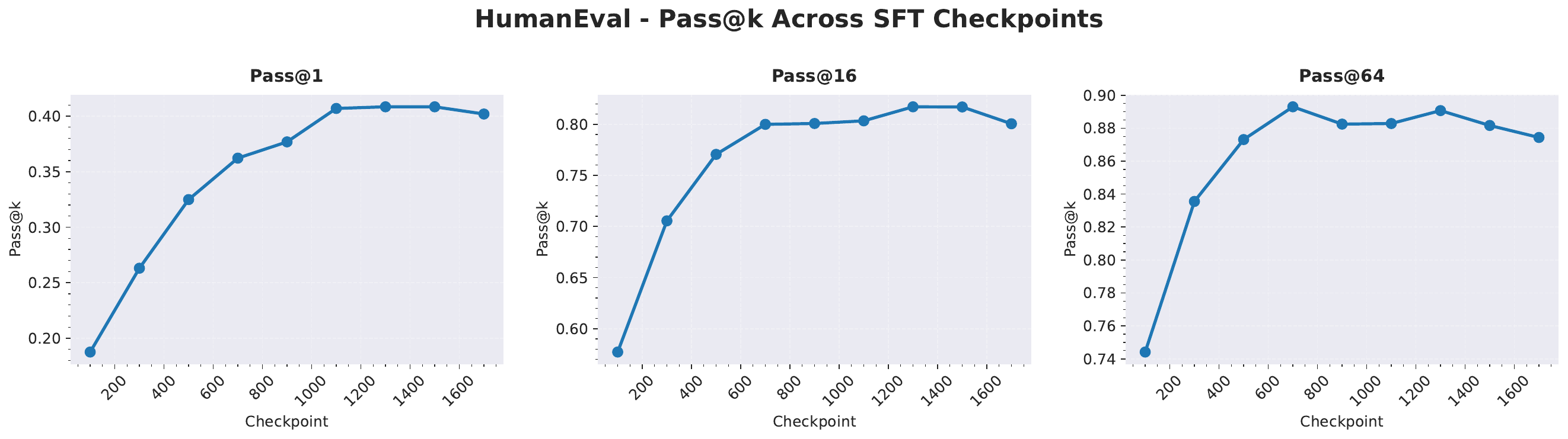} \caption{Pass@K performance of SFT checkpoints on HumanEval (temperature = 1.0).} \label{fig:coding-sft-only} \end{figure}

\begin{figure}[h!]
\centering
\includegraphics[width=1.0\linewidth]{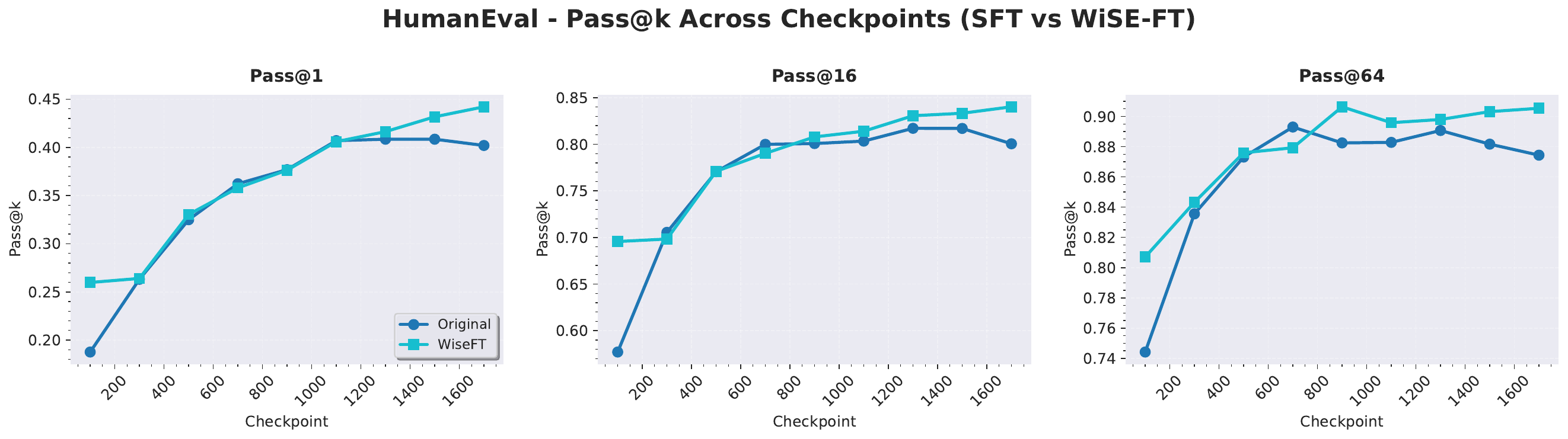}
\caption{Comparison of pass@K for SFT checkpoints and their WiSE-FT counterparts at k = 1, 16, 64.}
\label{fig:coding-sft-vs-wiseft}
\end{figure}

\begin{figure}[h!]
\centering
\includegraphics[width=1.0\linewidth]{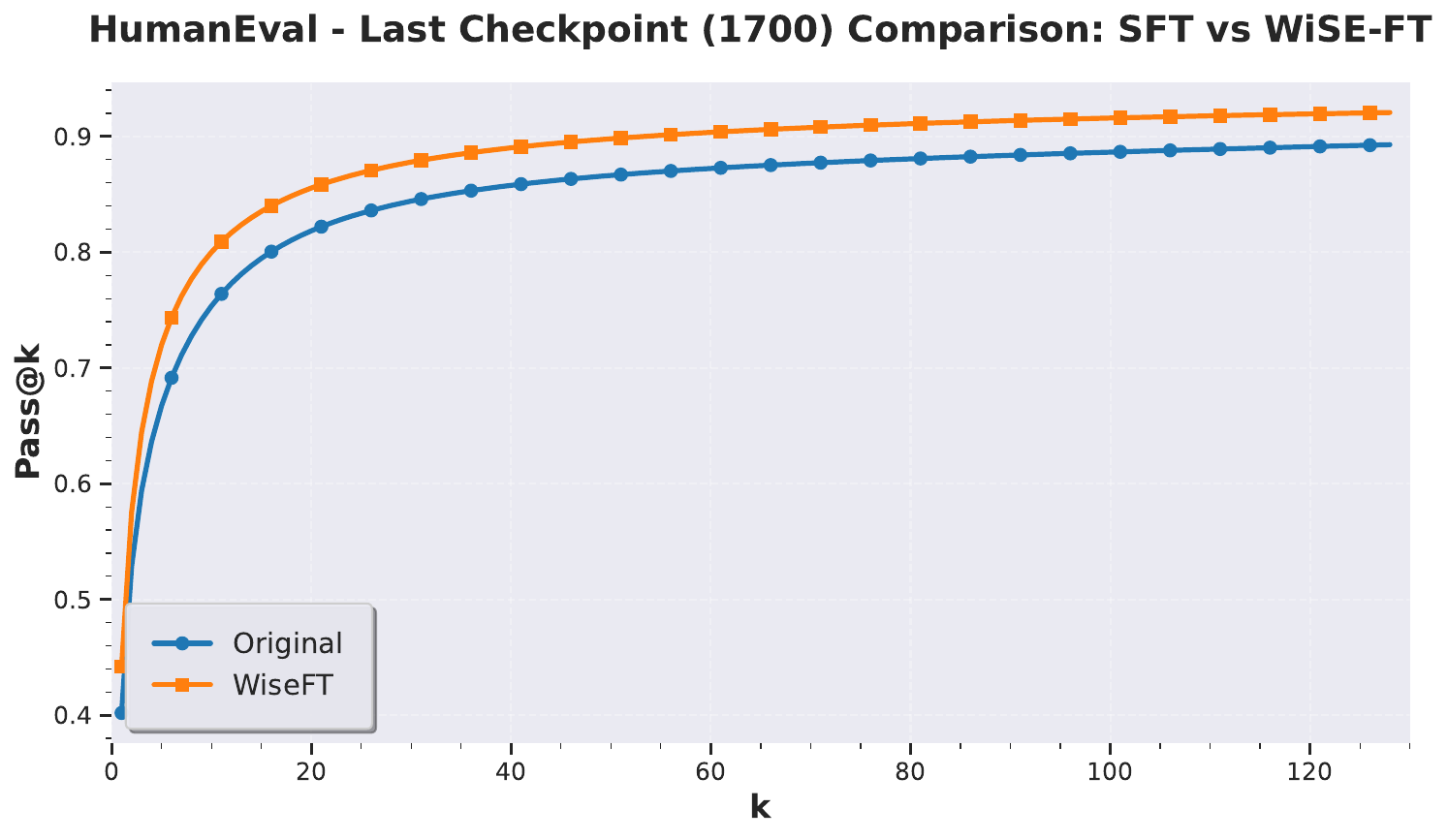}
\caption{Pass@K performance of the final SFT checkpoint versus its WiSE-FT variant.}
\label{fig:coding-final-vs-wiseft}
\end{figure}

\newpage

\end{document}